\definecolor{dgreen}{rgb}{0.0, 0.5, 0.0}
\newcommand{\algo}{\mathcal{A}}
\newcommand\norm[1]{\left\lVert#1\right\rVert}
\DeclareMathOperator{\softmax}{softmax}
\DeclareMathOperator{\ca}{CA}
\def\name{SAFE\xspace}
\renewcommand{\paragraph}[1]{\noindent\textbf{#1}}
\newtheorem{definition}{Definition}
\newtheorem*{theorem*}{Theorem}
\newtheorem{theorem}{Theorem}
\ificcvfinal\pagestyle{empty}\fi
\begin{document}

\title{\name: Machine Unlearning With Shard Graphs}

\author{
Yonatan Dukler$^{1*}$,
Benjamin Bowman$^{1,2\dagger*}$,
Alessandro Achille$^{1*}$,
Aditya Golatkar$^{1,2\dagger}$, \\
Ashwin Swaminathan$^{1}$,
Stefano Soatto$^{1}$ \\
AWS AI Labs$^{1}$, UCLA$^{2}$\\
{\tt\{dukler,bowmaben,aachille,agolatka,swashwin,soattos\}@amazon.com}\\
}

\maketitle
\ificcvfinal\thispagestyle{empty}\fi

\begin{abstract}
We present Synergy Aware Forgetting Ensemble (SAFE), a method to adapt large models on a diverse collection of data while minimizing the expected cost to remove the influence of training samples from the trained model. This process, also known as selective forgetting or unlearning, is often conducted by partitioning a dataset into shards, training fully independent models on each, then ensembling the resulting models.  Increasing the number of shards reduces the expected cost to forget but at the same time it increases inference cost and reduces the final accuracy of the model since synergistic information between samples is lost during the independent model training. 
Rather than treating each shard as independent, \name introduces the notion of 
a shard graph, which allows incorporating limited information from other shards during training, trading off a modest increase in expected forgetting cost with a significant increase in accuracy, all while still attaining complete removal of residual influence after forgetting. \name uses a lightweight system of adapters which can be trained while reusing most of the computations. This allows \name to be trained on shards an order-of-magnitude smaller than current state-of-the-art methods (thus reducing the forgetting costs) while also maintaining high accuracy, as we demonstrate empirically on fine-grained computer vision datasets.
\end{abstract}

\section{Introduction}
Large-scale neural networks are typically trained on large monolithic datasets. However, real world data often comes from many different sources which may require different treatment depending on their terms of use. In some cases, the terms can change, triggering the need to not just erase a portion of the data, but also remove its influence on the trained model.
\footnotetext[1]{Equal contributions} 
\footnotetext[2]{Work done during an internship at AWS AI Labs.}
\hspace{-0.3cm} If the model is trained with the entire dataset in an undifferentiated fashion, even a request to remove a small fraction of the data may result in re-training the entire model on the complement.
Considering the scale of large neural networks currently in use, re-training the model after each data erasure is costly. Thus, it is beneficial to develop methods to trace the influence of various segments of data onto the trained model, and to remove their effect if needed, especially as the scale of production models and the amount of training data continues to grow.

One simple and robust approach to forgetting is compartmentalizing the information of different subsets of data into distinct model parameters. In this scenario, the training data is split into disjoint shards, and different parameters or ``adapters'' are trained separately on each shard, and then ensembled to obtain a final model. The advantage of this approach is that, if the influence of a sample needs to be removed, only the parameters corresponding to its shard have to be retrained. Moreover if an entire source of data needs to be dropped, one can simply drop all the adapters corresponding to the shards containing its samples. There are, however, two main disadvantages to this approach. On the implementation side, there is an increase in storage/inference scaling with the multitude of adapters to ensemble. On a more fundamental level, since each adapter is trained independently on a fraction of the available data, it forfeits synergistic information that may be present in data stored in different shards.  As the number of shards grows, adapters are trained on increasingly impoverished samples, leading to degraded performance relative to monolithic training (see \cref{fig:shard_synergy}).

Thus, practitioners have to choose a trade-off between increasing the accuracy of the model and minimizing the expected cost of forgetting a sample. In this work, we show that naive sharding, that is uniform partitioning of the training set, is suboptimal in many realistic use cases. Instead, shards should be constructed to maximize synergistic information available at training time. To that end, we introduce \name (Synergy Aware Forgetting Ensemble),
an algorithm that leverages synergistic information between data shards to maximize accuracy for a given forgetting cost.
We show that \name allows training on highly sharded data (with as many as 256 shards) with a $14\%$ accuracy boost over uniform sharding for the same forgetting cost.

\begin{figure*}[ht]
    \centering
    \includegraphics[width=0.85\linewidth]{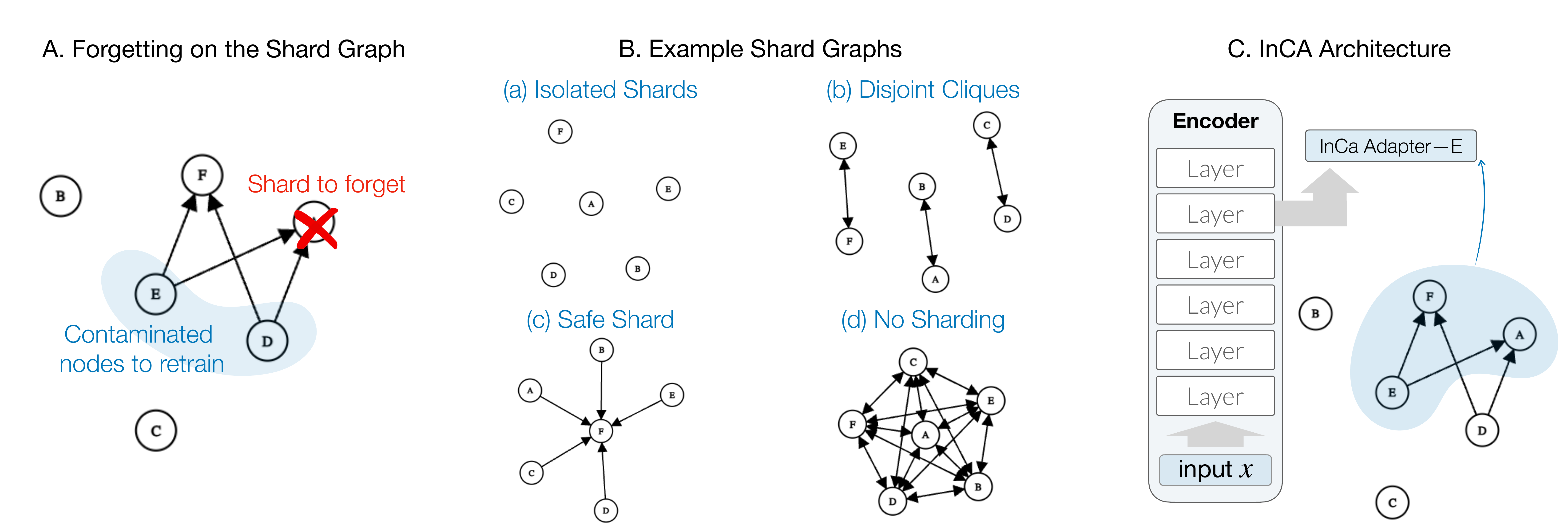}
    \caption{\textbf{(A) Forgetting on the Shard Graph.} Upon a forget request, the node containing the sample to be forgotten and all sub-models with outbound connections to the node are re-trained.
    \textbf{(B) Example Shard Graphs.} Prototypical examples of the Shard Graph include \textit{(a) Isolated Shards:} (instant forgetting of whole shards, low accuracy).  \textit{(b) Disjoint Sharding:} (fast forgetting, competitive accuracy). \textit{(c) Safe Shard:} one shard not likely to receive forget requests connected to all nodes (instant forgetting, competitive accuracy in appropriate settings).  \textit{(d) No Sharding:} (forgetting is infeasible, ideal performance). \textbf{(C) InCA Architecture.} 
    We use lightweight cross-attention adapters acting upon the representations of a pretrained transformer encoder.  Each node in the graph is associated to an InCA adapter trained on the union of the data of a node and all its outbound edges.
    }
    \label{fig:graph_examples}
\end{figure*}

To realize this, we introduce the notion of a Shard Graph (SG) (see \cref{fig:graph_examples}). Each node of the graph is a small shard of data in the original dataset. (Directed) edges between shards indicate that the corresponding adapter is not trained in isolation as usual, but also uses data from the shards it connects to, thus increasing the accessible synergistic information. On the downside, when a forget request is received, all nodes pointing to the forgotten node must be retrained, increasing the expected forgetting cost. Since different shards have different likelihood of receiving forget requests and contain different information, the problem becomes how to draw connections that maximizes the synergistic gain while minimizing the increase in expected forgetting cost. For example, shards that are unlikely to contain samples that need to be forgotten (e.g., a shard that contains synthetic data or highly vetted data) can be simultaneously  connected to many other nodes significantly increasing accuracy without increasing the expected forgetting cost.

A SG may, however, have hundreds of nodes, thus raising the problem of how to train and perform inference with hundreds of adapters. \name addresses this problem using InCA adapters \cite{inca}. These are small modules connected to a frozen pre-trained backbone, which can be trained in parallel at less than the cost of fine-tuning a single monolithic model. \name can thus handle orders of magnitude more shards than other state-of-the-art algorithms. While the use of lightweight adapters may reduce accuracy, we found the loss in accuracy negligible when the chosen backbone is well aligned with the user data, and it is offset by the large reduction in forgetting cost.

We then present two extensions of \name. First, we consider the case where information about individual samples from connected shards is bounded using ($\epsilon$, $\delta$)-Differential Privacy, which can be interpreted as using a weighted graph with edges of weight $\epsilon$. In situations where ($\epsilon$, $\delta$)-DP is an acceptable guarantee, this formulation can further reduce the expected forgetting cost by avoiding the need to always retrain all connected shards after a forget request. Second, we show that \name can be used to serve à-la-carte models \cite{Bowman_2023_CVPR}, where each user can access a model trained only on a user-specific subset data, as long as the connectivity of the SG is compatible with the user access rights.

Empirically, we show that SAFE can reduce the cost of forgetting samples by an order of magnitude compared to similar forgetting algorithms, while maintaining a similar accuracy. While evaluation of forgetting algorithms is usually performed on simple datasets such as CIFAR, we show that SAFE can be applied to more complex and nuanced fine-grained computer vision benchmarks.

\section{Related Work}
The forgetting/machine-unlearning problem \cite{7163042,ginart2019making} focuses on removing information from a trained machine learning model. One line of work \cite{sisa,arcane,legonet,kochno,kumar2022privacy, Bowman_2023_CVPR} involves splitting the dataset into multiple shards and training separate models for each shard. In such settings forgetting a sample will only affect the model trained on the subset containing the data.  In \cite{linearfiltration} the authors enable forgetting for logistic regression by allowing the model to forget a class. The recent work \cite{arcane} achieves a 1-class linear classifier trained on only a single class by making predictions based on class-centroids. The authors of \cite{legonet} propose the model LegoNet which generalizes \cite{sisa} by modifying the ensembling procedure to a select-and-average approach with the top-$k$ most relevant sub-models in an instance-dependent manner. Such methods provide instant forgetting by removal of the corresponding model with low re-training time, however, they suffer from increased inference costs directly correlating with the number of shards. 
\par
Another line of work involves forgetting for a single deep network. Forgetting is difficult for deep networks \cite{Golatkar_2020_CVPR, golatkar2020forgetting} due to the highly nonlinear nature of the parameterization and nonconvexity of the learning problem. Such works often use approximations to the non-linear landscape of a deep network to propose unlearning methods. In \cite{guo2019certified} the authors perform a newton update for unlearning, while in \cite{mixedprivacyforgettinggolatkar,achille2021lqf} the authors train a linearization of a ResNet-50 starting from a pre-trained model.  The linearity of the parameterization, enables convexity of the optimization so that one can forget specific samples and obtain upper bounds on the mutual information after a certain number of forgetting steps. However, such unlearning methods are approximate, and as a result require complete re-training of the model after a fixed number of iterations when the privacy budget is consumed. The works of \cite{wu2020deltagrad,thudi2022unrolling} provide algorithms and seek understanding of unlearning by caching models, and unrolling the iterative steps of the optimization problem.
\par
The unlearning problem has also been explored in federated setups \cite{cao2022fedrecover,liu2020federated,wang2022federated,liu2022right,gong2022forget} where a client withdraws its data and its influence needs to be removed from the global model and other clients. This approach is different from sharding based methods where there is no notion of a centralized model, and a set of weak models need to be ensembled. Some recent works \cite{neel2021descent,gupta2021adaptive,chourasia2022forget,ullah2021machine,sekhari2021remember} provide stochastic unlearning algorithms (similar to \cite{abadi2016deep}) with rigorous theoretical guarantees similar to the probabilistic guarantees in differential privacy \cite{dwork2014algorithmic} or algorithmic stability. \cite{gupta2021adaptive} showed that the ``perfect" unlearning algorithms like \cite{sisa}, provide perfect forgetting only for uniform requests, and not adaptive requests. Unlearning guarantees of approximate algorithms decay with the number of forget requests, which require full re-training of the model eventually. In one extension of SAFE, we provide a mixed learning algorithm (each shard is non-private with respect to itself, but private for other shards) using differential privacy \cite{abadi2016deep,golatkar2022mixed}. The work of \cite{chen2022graph,zhu2023heterogeneous,chien2022certified,pan2022unlearning} studies the problem of unlearning on graph modalities for tasks such as edge prediction; we note our method is concerned with typical modalities albeit the synergy between data subsets is described via a directed graph (see Sec. \ref{sec:shard_graphs} and description of the SG).

\section{Forgetting on Shard Graphs}
\label{sec:forgetting_shard_graphs}
In this section we introduce the notion of a Shard Graph and the flexible forgetting approach it enables. We then show that existing shard-based forgetting algorithms can be re-interpreted as a degenerate case where the graph does not contain any edges. In the next section we will introduce SAFE, a forgetting algorithm that can take full advantage of the provided graph.

\paragraph{Shard Graphs.}
\label{sec:shard_graphs}
For ease of notation, in this section we will assume our label space is the discrete set $\mathcal{Y} = \{1, \ldots, K\}$. The Shard Graph (SG) is a directed graph $G = (V, E)$ where the set of nodes  $V:= \{S_1, \ldots, S_n\}$ denotes different data sources or data shards $S_i \subset \mathcal{X} \times \mathcal{Y}$, which are defined by the user based on their application. We use a directed edge $(S_i, S_j) \in E$  between two shards (i.e., ``$S_i$ points to $S_j$'') to denote that when training the adapter corresponding to $S_i$ we also allow access to data from $S_j$ (see later). In all graphs we consider we assume implicitly that each node $S_i$ has a self connection, $(S_i, S_i) \in E$, that is, the adapter of a shard is always trained on that shard's data.

\paragraph{Definition of forgetting.} Consider an algorithm $\algo$ (possibly stochastic like SGD) that, given a shard graph $G$ as input, outputs a model trained on the shards of $G$. 
In this case, we denote with $\Theta \mapsto \mathbb{P}(\algo(G) \in \Theta)$ the probability distribution of possible models produced by $\algo$ given a graph $G$. Let $G'$ be a shard graph where some of the data has been removed (for example, removing an entire node/shard or removing samples of the shard). For a training algorithm $\algo$, we define a forgetting procedure $U(\algo(G), G')$ which takes a trained model $M=\algo(G)$ and the reduced graph $G'$ to output a new model $M'$ which is indistinguishable from a model trained directly on $G'$. 
To incorporate the stochasticity of $\algo$, we require that the distribution of models outputted by the forgetting procedure $U(\algo(G), G')$ matches $\algo(G')$:
\[
\mathbb{P}(U(\algo(G), G') \in \Theta) = \mathbb{P}(\algo(G') \in \Theta) \hspace{2mm} \text{for all events }\Theta.
\]
Note that a forgetting algorithm can always trivially satisfy this by ignoring $\algo(G)$ and retraining from scratch on the reduced graph $G'$, that is, setting $U(\algo(G), G') = \algo(G')$. Doing so, however, is expensive especially if $G'$ contains lots of data. The quality of a forgetting procedure is gauged by its ability to minimize the cost of forgetting while maintaining high accuracy.

\paragraph{Independent shard forgetting.} 
Given a shard graph $G$, a trivial but efficient forgetting approach \cite{sisa,Bowman_2023_CVPR,arcane} is to discard the edges and train a separate adapter on each node $S_i \in V$.
More precisely, let $A(S_i)$ denote the adapter trained on the data in $S_i$. The model corresponding to the graph $G$ is then the ensemble of all the adapters:
\begin{align}
\label{eq:independent-training}
\algo(G) = \operatorname{ensemble}(\{A(S_i)\}_{S_i \in V}).    
\end{align}
The ensembling procedure is application specific, for instance, in classification we average the logits of the models.

Such $\algo$ admit a simple forgetting procedure $U$. If requested to forget an entire node/shard in $G$, we simply need to drop the corresponding adapter incurring a constant expected forgetting cost $O(1)$. When requested to forget a sample (or subset of samples) from a shard, we only need to retrain the corresponding adapter on the remaining data.
Assuming that the cost of training an adapter is linear in the size of its training data, the expected cost to forget one sample is $O(|S|)$, where $|S|$ is the expected shard size.

When the data is divided into small shards, the expected forgetting cost is a fraction of the cost of retraining the whole model from scratch (all shards). However, as we see in Fig.~\ref{fig:shard_synergy}, the accuracy of an ensemble model trained on many small shards (SISA, orange curve) can be significantly lower than the accuracy of a single model trained on all the data simultaneously (right-most point in the curve). This can be attributed to the loss of synergistic information when training independently on many small shards.

\begin{figure}[t]
    \centering
    \includegraphics[width=0.85\linewidth]{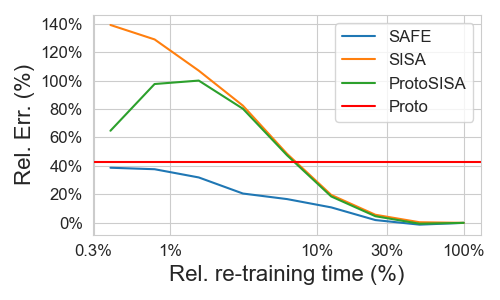}
    \caption{
    \textbf{Accuracy vs. expected cost of forgetting.} By reducing the number of shards one can improve the error (y-axis) at the expense of increasing the training time per forget request (x-axis).  We show while the error of uniform sharding methods (SISA, orange), (ProtoSISA, green) grows significantly at fine-sharding scales, SAFE is able to maintain low error and uniformly outperforms the baseline of classification based on class prototypes (blue line). We report the avg.\ for the datasets in Tab.~\ref{tab:safe_vs_other_unlearning}.
    }
    \label{fig:shard_synergy}
\end{figure}

\section{\name}
\label{sec:safe}

\name aims to reduce the impact of the loss of synergistic information by allowing shards connectivity in a way that does not significantly increase the expected forgetting cost while improving performance. Our proposed method can be formulated easily in our proposed shard graph formalism.

Let $G$ be a shard graph. Rather than training an adapter independently on the data of each shard, \name trains an adapter using also all the data contained in the connected shards. Formally, the model produced by \name is:
\begin{align}
\label{eq:safe-training}
\algo(G) = \operatorname{ensemble}\Big(\Big\{A\big({\textstyle\bigcup {N(S_i)}}\big)\Big\}_{S_i \in V}\Big)
\end{align}
where $\bigcup {N(S_i)}$ denotes the union of all the data contained in the shards connected to $S_i$  -- that is, its outbound neighborhood $N(S_i)$ (see \cref{fig:graph_examples}C). Note that, when the graph $G$ doesn't have any edges, \cref{eq:safe-training} reduces to \cref{eq:independent-training}.

Training on the union of data from connected shards exploits the synergistic information among those shards to improve accuracy at the expense of increasing the forgetting cost. However, depending on the structure of the data and the likelihood of forget requests, the increase in accuracy can greatly outweigh the increase in forgetting cost. 

\paragraph{Forgetting with \name.} Unlike before, if a sample in a shard is deleted, we need to retrain not only the adapter corresponding to that shard, but also the adapters of all shards pointing to it.  The cost of the procedure scales with the total amount of training data that needs to be revisited to train the adapters. Letting $x \in S_i$ be a sample to be forgotten, we can write as
\begin{equation}
    \label{eq:retrain-data}
    M_{x} :=  \bigcup \{ {\textstyle \bigcup} N(S_j)  : S_i \in N(S_j) \}
\end{equation}
the total data needed to retrain the adapters of all shards $S_j$ that point to $S_i$. Hence, the expected cost of forgetting depends on the expected size of $M_x$, which in turns depends on the graph topology. We now analyze some interesting cases, which will inform our experimental setup.

\textit{Random connectivity.} Suppose each node is connected to $d$ other nodes uniformly at random. Then in expectation (see Appendix) we have:
\[ \mathbb{E}|M_x| = \Theta(|S| d^2).  \]
Which scales quadratically with the degree.

\textit{Partition in disjoint cliques.} Consider now the case where the graph is partitioned into disjoint cliques of size $d$ (see \cref{fig:graph_examples}B\,(b)). In this case, all the $N(S_j)$ in \cref{eq:retrain-data} perfectly coincide and the union corresponds to $N(S_j)$. This leads to an expected forget cost that scales with the size $d$ of the shard's clique
\[
\mathbb{E} |M_x| = d \cdot |S|.
\]
In particular, compared to the random connection case, the cost is linear instead of quadratic in the degree of the nodes.

Based on this analysis, we focus on graphs that can be represented as a union of disjoint cliques, as it leads to lower cost of forgetting while allowing the same amount of connectivity. 
Finally, the case where an entire shard is deleted, rather than a single example, has similar analysis. The corresponding adapter needs to be removed from the ensemble, and all adapters of shards pointing to it need to be retrained. The expected cost scales in the same way as before as a function linear with $d$.

\paragraph{Refined Shard Graph.}
In practice, it may often happen that different shards contain samples from different classes, or even from a single class.
To deal uniformly with all the cases --- and to enable finer level of sharding and thus faster forgetting --- it is convenient to
restrict the adapters of \name to binary classifiers for each label present in the adapter's training set. This can be seen as refining the graph so that each node contains data from only one class (its positive samples) with the negative examples coming from the connected nodes.
Formally, given a shard $S \subset \mathcal{X} \times \mathcal{Y}$ we define the label occurrence map $L(S) = \{y : (x, y) \in S\}$.
Furthermore, for each label $k \in \mathcal{Y}$, we define the refined shard $S^{(k)} = \{(x, y) \in S : y = k\}$.
Then given a shard graph $G=(V, E)$ we construct a refined vertex set
\begin{gather*}
   V' := \{ S^{(k)} : S \in V, \quad k \in L(S)\}
\end{gather*}
and edge set
\begin{align*}
   &E' := \{ (S^{(h)}, S'^{(k)}) : (S, S') \in E, h \in L(S), k \in L(S')\}.
\end{align*}
Hence, for each node $S^{(i)} \in V'$ we train a binary classifier where the positive examples come from $S^{(i)}$ and the negative examples come from the all the connected nodes.

\begin{figure}
    \centering
    \includegraphics[width=0.8\linewidth]{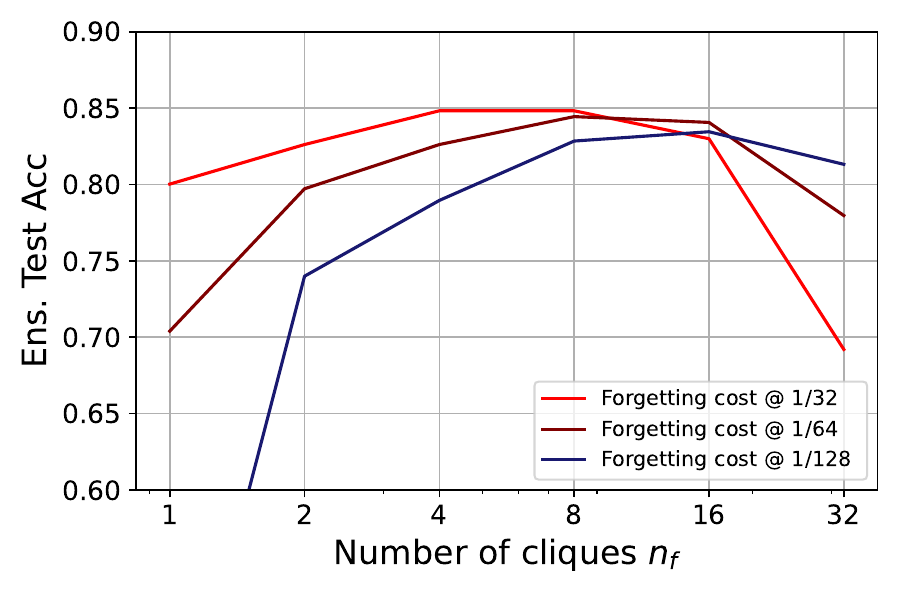}
    \caption{
    \textbf{Shard graph class composition} We report the test accuracy for MIT-67 accuracy under different shard topologies. The different curves correspond to different effective re-training times ($1/n$) as compared with standard training (smaller means faster). The points on each curve correspond to partitioning in different numbers $n_f$ of fine cliques.}
    \label{fig:pareto}
\end{figure}

\section{Synergistic Bilevel Sharding}
\label{sec:bilevel-sharding}

We now tackle the question of how to generate a graph structure that increases the synergistic information for a given expected forgetting budget. Forgetting methods usually split the data in uniform shards. However, we note that samples from the same classes, or related classes, have more synergistic information, so should preferentially appear in the same clique, even if this means that the clique may not contain all classes due to size constraints (note that, as mentioned before, we do not need the adapters to see all classes since each is a binary one-vs-all classifier). Following this intuition, we suggest the following strategy: We first split the data in $n_c$ disjoint ``coarse'' shards using class-balanced subsampling. For each coarse shard, we draw the nodes corresponding to each class and partition them randomly into cliques of size $d$, i.e., each clique contains examples from $d$ classes. This results in a number $n_f = n_\text{classes}/d$ of ``fine'' cliques per coarse shard, for a total of $n = n_c \cdot n_f$ cliques. We note that the expected forgetting costs scales with $1/n$, i.e., the amount of data in each clique. Partitioning the data uniformly, as commonly done, is equivalent to selecting $n_f=1$.

\textbf{Optimal graph structure.} Since the expected forgetting cost is the same as long as the product $n=n_c \cdot n_f$ remains constant, we check what $n_f$ produces the best model for a fixed $n$. Low $n_f$ increases the variety of the classes in each shard at the expense of loss of synergistic information, while high $n_f$ increases the synergistic information but each model sees a smaller subset of classes. In  \cref{fig:pareto} we plot the trade-off on the MIT-67 dataset. We see that indeed uniform sharding ($n_f=1$) is never optimal, and that using a higher (but not too high) values of $n_f$ is essential to get good accuracy when aiming for a low forgetting time, and can improve accuracy by more than 15\%, thus supporting the design choices for our method. Random partitioning across coarse and fine levels also makes our method robust against non-adaptive forget requests.\footnote{When the graph structure is data dependent, DP techniques may be required to prevent information leakage from it.}

\section{Efficient implementation of \name}
\label{sec:implementation}
Training independent large models on hundreds of shards leads to prohibitive storage and inference costs, and increases the expected cost of retraining for each forget request. This suggest using a shared backbone and a series of small shard-specific adapters. We use a variant of Open-InCA cross-attention adapters \cite{inca}, which lend themself easily to massive parallel training and inference (See Appendix \ref{sec:open_inca_safe} for details) while also providing high-accuracy on difficult fine-grained classification tasks. With Open-InCA adapters, we can efficiently compartmentalize the training data of each shard to its corresponding Open-InCA class parameters. 

\textbf{InCA adapters \cite{inca}.} Let $\mathbf{z} = (z_1, \ldots, z_n) = f_w(\mathbf{x})$ be the activation map produced by a frozen vision encoder $f_w(x)$. Given a set of learnable class-specific query tokens $\mathbf{q} = (q_1,\ldots, q_K)$, the output $\mathbf{y} = \ca_\theta(\mathbf{z}, \mathbf{q}, \mathbf{v})$ of InCA adapters $\ca$ are the logits $\mathbf{y} = (y_1, \ldots, y_n)$ defined by
\begin{align*}
y_i &= v_i \cdot e_i \\
\mathbf{e} &= \operatorname{cross-attention}_\theta(\mathbf{z}, \mathbf{q})    
\end{align*}
where $\theta$ are the parameters of the cross-attention layer and $\mathbf{v} = (v_1, \ldots, v_k)$ is a learnable set of vectors $v_k$ (which can be interpreted as binary linear classifiers). To keep the notation uncluttered, in the following we write $q_i$ to denote both the query vectors and the corresponding classification vector $v_i$. An important property of Open-InCA for our application is compositionality.  Let $[\mathbf{q}, \mathbf{q'}]$ denote the concatenation of $\mathbf{q}$ and $\mathbf{q'}$, then
\begin{equation}\label{eq:inca_composition}
\ca_\theta(\mathbf{z}, [\mathbf{q}, \mathbf{q}']) = [\ca_\theta(\mathbf{z}, \mathbf{q}), \ca_\theta(\mathbf{z}, \mathbf{q}')].
\end{equation}
This suggests that, rather than training a separate model on each node, we can train individual node specific $q_i$ together by concatenating them to obtain the final model. In addition to the compositionality by learning different queries $q_i$ for each classifier, Open-InCA remains expressive in selecting sub-task specific representations for each classifier.

\textbf{Applying InCA to SAFE.} We freeze and use the same cross attention weights $\ca_\theta$ which are shared across all nodes, with frozen parameters $\theta$.\footnote{We find that simply using a random initialization for $\theta$ provides a good and unbiased performance across tasks.}  Then, for each reduced shard $S_k^{(i)}$ we create corresponding queries $q^k_i$, and train the resulting binary one-versus-all classifier
\[
y_i^k(\mathbf{z}) = \ca_\theta (\mathbf{z}, q^k_i)
\]
on the data of all connected shards using a binary cross entropy (BCE) loss.  If there is a clique of nodes in the refined shard graph, we can group them together and train simultaneously instead with the cross entropy loss.  At inference time, we utilize the compositionality of the $\ca_\theta$ adapters, by computing the concatenation of all $q^k_i$ sharing the computation of all the adapters in a single forward pass and easily ensemble the resulting logits, leading to the final logits,
\[
y_i(\mathbf{z}) = \operatorname{mean}_k\big( \ca_\theta(\mathbf{z}, \mathbf{q}, \mathbf{v} )\big).
\]
In the equation above the embedding $\mathbf{z} = f_w(\mathbf{x})$ is of a test sample $\mathbf{x}$. For classification, we select the class corresponding to the highest logit.  Training each $q_i^k$ and $v_i^k$ sequentially is still expensive, and wastes computation since the same sample can be used as a negative to train multiple connected nodes. Instead, since for the BCE loss the gradients of different $q_i$ are independent, we can train all the $q_i$ at the same time over a single epoch on the whole (re)-training set, provided the loss is appropriately masked to prevent information from unconnected shards from transpiring into each $q_i$ (see Appendix \ref{sec:open_inca_safe} for details). Further since InCA does not backpropagate through the backbone $f_w$, we can also pre-compute and store the embeddings $\mathbf{z}$. Using all these techniques, we are able to train hundreds of shards at the same time in under 10 minutes on a single GPU. 

\section{Prototypical classifier}
\label{sec:proto}

Class prototypes are another viable approach to forgetting used by ARCANE \cite{arcane}. Given a dataset $D=\{(x_i, y_i)\}_{i=1}^N$ and an embedding $z = f_w(x)$, we define the prototype of the $k$-th class as
\[
p_k = \frac{1}{N_c} \sum_{(x,y) \in D^{(k)}} f_w(x)
\]
where $D^{(k)}$ are the samples of class $k$ and $N_c =|D^{(k)}|$. We can then construct a simple linear classifier
\[
y_k(z) := d_\text{cos}(z, p_k),
\]
where $d_\text{cos}$ denotes the cosine distance. Such ``prototypical'' classifiers allow instantaneous forgetting: to forget a training sample $(x_i, y_i)$ we just need to remove it from its class prototype
$p_{y_i} \mapsto \frac{ N_c\,  p_{y_i} - f_w(x_i)}{N_c-1}$. On the other hand, this classifier has suboptimal classification accuracy compared to a trained classifier on large shards of data.

However, when the shards consists only of a few samples, classifiers trained on individual shards may overfit. In such cases, the prototypical classifier can be used to provide an inductive bias \cite{snell2017prototypical}. Since the added computational and space complexity to use the prototypical classifier is negligible, we combine it into the \name model using the following expression:
\[
\operatorname{SAFE}(z) = (1 -\lambda) \cdot M(G)(z) + \lambda \cdot  \operatorname{Proto}(z),
\]
where $M(G)$ is the ensemble model in \cref{eq:safe-training}, $\operatorname{Proto}$ denotes the prototype-based classifier and $\lambda = \exp\big(-\frac{d \cdot |S|}{100}\big)$ is an interpolation weight that relies more on the prototypical classifier when the amount of data $d|S|$ used to train each adapter is small.
\begin{table*}[t]
    \centering
\includegraphics[width=0.99\linewidth]{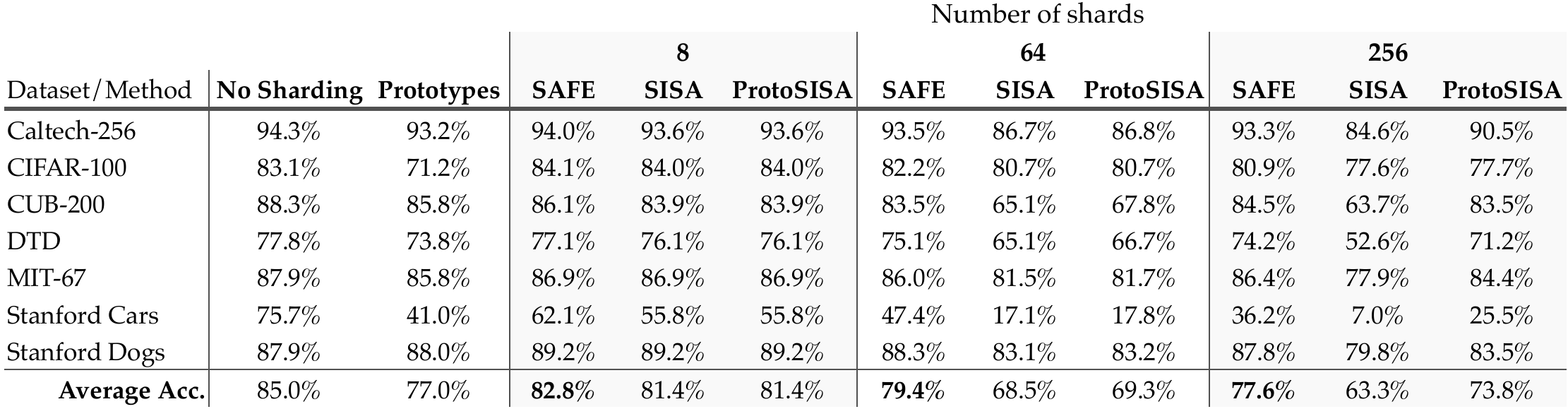}

\vspace{0.5em}

 \caption{\textbf{Accuracy of Unlearning Approaches:} We report accuracy and re-training efficiency (measured by sharding level for re-training) on a diverse set of visual classification datasets.  Forgetting a sample is equivalent to re-training the shard containing the sample.  The retraining time is inversely proportional to the number of shards.  \name allows sharding up to 256 subsets without significantly compromising accuracy. For each level of sharding (for a fixed number of shards) we try different SG topologies and report the best results in the table.  We note that for 8 shards the accuracies for SISA and ProtoSISA are the same due to $\lambda$ being exponentially small when the shards are large.}
\label{tab:safe_vs_other_unlearning}
\end{table*}

\section{Extensions of SAFE}
\paragraph{Stochastic forgetting for reduced cost.}
\label{sec:dp-forgetting} In the previous sections we presented forgetting approaches that use the edges in the shard graph to define complete usage of a shard or no usage (when there is no connection). Below we present the notion of limited shard information defined via differential privacy (DP), in which data of different nodes (data shards) of the graph are shared to a sub-model in a differentially private fashion. This can be defined by an edge weight that bounds the information shared about each sample, and can be interpreted as the probability a sample is identified in the training set. Specifically consider the binary classifier sub-models introduced in \cref{sec:implementation}, for the positive-negative samples defined by the graph topology we assign complete usage to the positive node and limited usage to the outbound connections of the negative nodes. This corresponds to a special case of mixed DP \cite{mixedprivacyforgettinggolatkar}, which results in a simple training algorithm (\name-DP) where during each epoch, each node is trained with its own data without privacy, and with data from the neighbouring node using DP. This algorithm satisfies an approximate definition of forgetting, more precisely, $\algo(G)$ is called an $(\alpha, \beta)$-unlearning algorithm if
\[\mathbb{P}(U(\algo(G), G') \in E) \leq  e^{\alpha} \mathbb{P}(\algo(G') \in E) + \beta\] for all events $E$. This definition measures the privacy leakage in terms $(\alpha, \beta)$ using group DP. We enable each user to specify their target $\alpha_{\text{target}}$ (such that $\beta < 1$), and employ privacy accounting to identify budget overflow ($\beta > 1$) for sequential forget requests. 
Such an algorithm is useful in adversarial conditions for protection against worst case adaptive forget requests.

\paragraph{\`A-la-carte models via the Shard Graph.}
The problem of constructing a unique model for a particular user that only uses data consistent with their access permissions and personal preferences, \ie the ``\`a-la-carte learning" problem \cite{Bowman_2023_CVPR}, can be solved using SAFE.  For a given user one can identify which nodes in the graph the user is unable or unwilling to access.  Based on this, one can ensemble only the adapters trained on the nodes with no outbound connections to the ineligible nodes.  Alternatively, one could simulate an artificial ``forget request'' by dropping all the data that the user is unable to access and retraining the corresponding InCA adapters.  Since InCA adapters with cached activations can be trained in seconds, one could potentially perform this training efficiently to serve custom models to different users on-demand.

\section{Experiments}
\label{sec:exp}
\textbf{Model.} In our experiments  we use as encoder a VIT-L/16 transformer architecture \cite{dosovitskiy2021an}  pretrained on ImageNet-21k at 224 image resolution.\footnote{We use the  \texttt{vit\_large\_patch16\_224\_in21k} pre-trained model in the \texttt{timm} library \cite{rw2019timm}.} 
We use the InCA adapters from \cite{inca}, in particular we train the head and queries, but keep the cross-attention layer frozen and shared between all shards. We train each adapter with AdamW for 30 epochs using cosine annealing starting from $\text{lr} = 0.05$ and weight decay of $10^{-4}$. See the Appendix for full details.

\paragraph{Datasets.} 
We evaluate on the following vision classification datasets, covering fine-grained classification tasks and domain shifts with respect to the ImageNet pretraining:
CUB-200 \cite{WahCUB_200_2011}, MIT-67 \cite{mit67recognizing}, Caltech-256 \cite{griffin_holub_perona_2022}, CIFAR-100 \cite{Krizhevsky09learningmultiple}, Describable Textures (DTD) \cite{cimpoi14describing}, Stanford Cars \cite{KrauseStarkDengFei-Fei_3DRR2013}, and Stanford Dogs \cite{KhoslaYaoJayadevaprakashFeiFei_FGVC2011}. 

\textbf{Baselines.} We compare our method against two main classes of methods for forgetting: Prototypes (motivated by ARCANE\footnote{In \cite{arcane} they work with smaller models (e.g. ResNet-18 with 11M params.) and train a separate embedding for each class, whereas we take the embeddings of a fixed large pretrained transformer (305M params.).} \cite{arcane}) uses a prototype based classifier (\cref{sec:proto}), which allows constant time forgetting but which cannot be trained to improve accuracy, SISA \cite{sisa} performs forgetting by training an ensemble of models on shards created by sampling uniformly at random. This corresponds to using $n_f=1$ when structuring the bilevel sharding (\cref{sec:bilevel-sharding}). Like in \name, increasing the number $n$ of shards leads to lower forgetting time but also lower accuracy. We re-implement SISA using the same architecture and training scheme as SAFE for a direct comparison.  \cite{sisa} uses further slicing of each shard (through intermittent model checkpoints) to further speed-up re-training complementarily.  Slicing can be similarly incorporated in SAFE, but we isolate slicing from our analysis as it is an orthogonal and complementary approach that can be applied in each method (with increased storage costs).

\paragraph{Comparison of \name with baselines.}
In \cref{fig:shard_synergy} we plot the average trade-off between accuracy and forgetting time for \name and other methods from the literature (SISA \cite{sisa} and Prototypes \cite{arcane}) across the datasets in \cref{tab:safe_vs_other_unlearning} for various target forgetting times. We see that \name performs uniformly better than SISA and Prototypes for all forgetting budgets. Thanks to the ability to train, it outperforms Prototypes for higher forgetting time budgets, while it significantly outperform SISA in the low-forgetting-time regime (high-sharding) due to better use of synergistic information.

\begin{figure}
    \centering
    \includegraphics[width=0.8\linewidth]{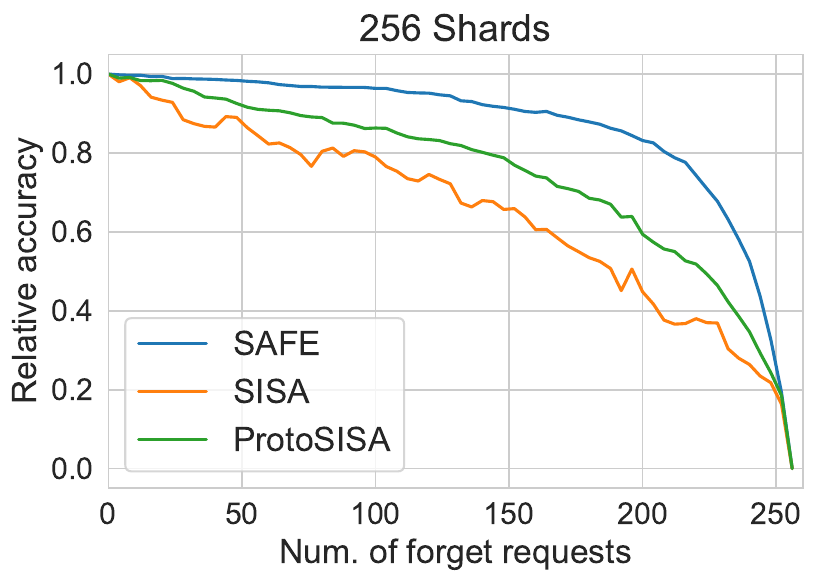}
    \caption{\textbf{Instant Forgetting.} We simulate a series of forget requests where for each forget request we drop an entire shard of data without retraining.  We plot the relative accuracy averaged across the 7 datasets for the methods SAFE, SISA, and ProtoSISA for 256 shards.}
    \label{fig:instant_forgetting}
\end{figure}

\paragraph{Domain shift.} In \cref{tab:safe_vs_other_unlearning} we see that the accuracy of \name and other forgetting methods decreases more rapidly with the sharding level for datasets that have a significantly different distribution from the ImageNet-21k pretraining of the backbone (e.g., DTD and Stanford Cars). We hypothesize that this is because the synergistic information of different samples is already contained in the backbone when the data is similar to the pre-training, hence the loss due to sharding is less influential. However, thanks to its better handling of synergistic information, we see that \name performs significantly better than the other baselines on difficult domains.

\paragraph{Ablations.} We can ask whether the better performance of \name is due to the use of prototypes, the synergistic information or both. We have seen in \Cref{fig:pareto} that synergistic information alone improves over SISA ($n_f=1$). In \cref{fig:shard_synergy} we also show the result of a further baseline, ProtoSISA, obtained by adding a prototype classifier to SISA using the same weighting scheme used for \name. We see that while adding prototypes to SISA boosts performance, \name still significantly outperforms both other methods, showing that indeed both aspects are important.

\paragraph{Instant forgetting.}\label{par:instant_forgetting} 
In certain situations, a service provider will need to satisfy a forget request instantly and thus will need to drop an entire data source without retraining, meaning that all adapters using those samples will need to be dropped without being retrained and replaced.  To investigate how robust SAFE, SISA, and ProtoSISA are to such requests, in \cref{fig:instant_forgetting} we plot the relative accuracy of SAFE, SISA, and ProtoSISA after a series of forget requests for ensembles with 256 shards.  We see that \name exhibits a smaller decline in relative accuracy in the presence of forget requests, and uniformly outperforms SISA and ProtoSISA.  For all methods the marginal decline in accuracy increases as the number of forget requests increases, suggesting that the ensemble becomes more sensitive as additional predictors are dropped.

\begin{figure}
    \centering
    \includegraphics[width=0.95\linewidth]{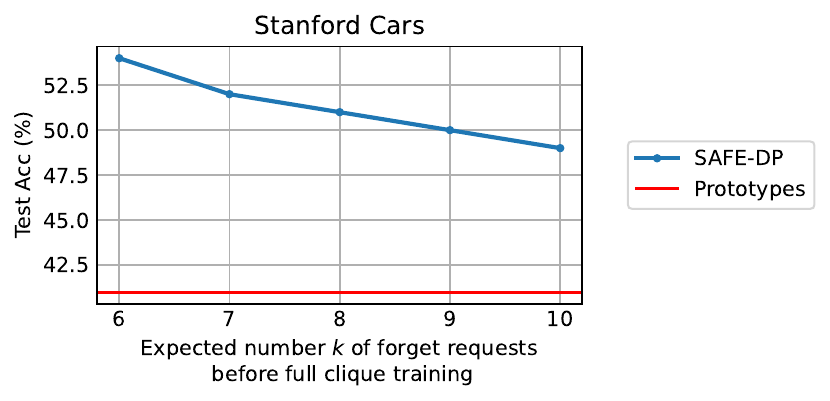}
    \caption{\textbf{Stochastic forgetting.} Training with DP on the outbound nodes we can satisfy $k$ forget requests before having to retrain the whole graph, at the expense of lower accuracy.}
    \label{fig:dp-forgetting}
\end{figure}

\paragraph{Stochastic forgetting.} In \cref{sec:dp-forgetting} we propose a DP-based mechanism that allows shards to receive up to a given number $k$ of forget requests without having to retrain a whole clique, at the expense of decreased accuracy due to the bound on information imposed by DP. In \cref{fig:dp-forgetting} we show this trade-off on a dataset with significant distribution shift (Stanford Cars). The model trained, for example, with SAFE-DP ($k=8$) provides lower accuracy than SAFE ($n=8$) ($51.0\%$ instead of $62.1\%$), but has a better worst case guarantee (it can accommodate 8 sequential forget requests before retraining). Thus \name-DP may trade-off a worst case privacy guarantee for model accuracy. This is especially useful in settings with adversarial forget requests \cite{gupta2021adaptive}.

\section{Multi-Domain synergy experiments}
Next we study \name's ability to harness synergistic effects for tasks involving multiple domains. For this we consider the 4-domain challenge, DomainNet-126 \cite{Saito_2019_ICCV} that is a curated subset of the extended DomainNet suite \cite{peng2019moment}. To best evaluate synergy between domains, we ensure each domain's dataset is of the same size, and we sub-sample the training and test set of each domain to be of fixed size and with a balanced distribution of classes. This results in 6300 training samples and 2520 test samples for each domain. \par
To test the benefits of \name for the multi-domain task of predicting among the 126 different common categories, now with images coming from different domains, we consider different SG topologies.
We define the topologies tested as follows.
\begin{itemize}
    \item \textbf{SISA In-Domain} Each domain is trained separately with SISA-style uniform sharding with $16$ shards in each domain ($64$ total shards).
    \item \textbf{SISA Cross-Domain} The training datasets are merged into a single cross-domain training dataset that is used for SISA-style uniform sharding (shards are maintained to be of same size as in ``SISA In-Domain'', \eg $64$ total shards).
    \item \textbf{\name In-Domain} \name is applied to each domain separately, which creates cliques containing subsets of classes with each clique having the same re-training costs as the previous two approaches.
    \item \textbf{\name Cross-Domain} The cliques are constructed to contain synergistic connections between different domains with the overall clique size remaining the same.
\end{itemize}
In Table \ref{tab:domain_net} we evaluate the different approaches for a $64$ shard level. For in-domain topologies this corresponds to 16 shards per-domain. For \name we use $4$ coarse shards and $4$ fine shards.  We observe that by modifying the Shard Graph topology for \name and allowing for shard connections between different domains, \name is capable of learning more accurate representations at the same training and unlearning costs. 
\begin{table}
    \centering
\resizebox{0.49\textwidth}{!}{
\begin{tabular}{c|c| ccccc}
\toprule
Method &  All &  Clipart &  Sketch &  Real &  Painting \\
\midrule
SISA (in-domain) &               67.88 &              67.70 &             59.25 &           85.12 &               59.44 \\
\name (in-domain) &               73.41 &              73.77 &             66.19 &           88.89 &               64.80 \\
\hline
SISA (synergy) &               69.97 &              69.84 &             62.02 &           86.67 &               61.35 \\
\name (synergy) &               \textbf{76.11} &              \textbf{77.14} &             \textbf{68.10} &           \textbf{90.40} &               \textbf{68.81} \\
\bottomrule
\end{tabular}
}
\caption{\textbf{Multi-domain unlearning on DomainNet-126} We report the test accuracy when using different shard graph topologies on DomainNet-126.  The column ``All" corresponds to evaluation on the union of the 4 domains, whereas the other four columns correspond to the test accuracy on the single domain.}
\label{tab:domain_net}
\end{table}

\section{Conclusion}
We introduced the Shard Graph, a directed graph describing the access relations between data sources for training. This graph informs the creation of \name: an ensemble of lightweight adapters trained on the datasets specified by the graph structure. By constructing the graph to maximize synergies between datasets while minimizing connections and retraining time, we are able to handle an order of magnitude more shards than competing methods, and achieve a $14\%$ accuracy boost over competing methods for the same forgetting cost.
We conclude that maximizing synergistic information while minimizing dataset overlap is the fundamental trade-off at the core of compartmentalization-based forgetting, which so far has been under-explored.
 In some cases, the accuracy of our method may be limited by the use of light-weight adapters. However, InCA adapters demonstrate high accuracy while allowing us to efficiently train many adapters in parallel without information leakage and permit fast retraining, which make them conducive to compartmentalization-based forgetting. Rather, we find that the bottleneck in accuracy is mainly due to the loss of synergistic information due to sharding which we alleviate with \name.
Finally, optimizing the shard graph structure to utilize additional properties in the data without leaking sensitive information is an important problem which we leave to future work.

{\small
\bibliographystyle{ieee_fullname}
\bibliography{main.bib}
}

\clearpage

\renewcommand{\thesection}{\Alph{section}}
\renewcommand{\thesubsection}{\Alph{section}.\arabic{subsection}}
\setcounter{section}{0}

\newcommand{\EE}{\mathbb{E}}
\newcommand{\PP}{\mathbb{P}}
\newcommand{\parens}[1]{\left(#1\right)}
\newcommand{\brackets}[1]{\left[#1\right]}
\newcommand\abs[1]{\left\lvert#1\right\rvert}

\section*{Supplementary Material}
\section{Parallel and compartmentalized training of SAFE}\label{sec:open_inca_safe}
In the following section we review the Open-InCA architecture and its use for efficient training of compartmentalized adapters in SAFE. In our training we are able to train all adapters in parallel while having exact control of each parameter's access to samples' gradients. This approach enables us to define the Shard Graph flexibly and provide custom ``data access'' to each adapter while training all of SAFE's adapters in parallel.

\paragraph{Open-InCA.}
In the work of \cite{inca} it is shown that lightweight cross-attention adapters attached to intermediate network representations are capable of extracting relevant representations and achieving competitive accuracy relative to expensive full fine-tuning. For regular learning, given an activation map $\mathbf{z}$ of a pre-trained model, the InCA adapter structure is defined as
\begin{align*}
    \mathbf{e} &= \operatorname{cross-attention}_{\theta_1}(\mathbf{z}, [q]) \\
    \mathbf{y} &=  \operatorname{Linear}_{\theta_2}(\mathbf{e}).
\end{align*}
By learning the cross-attention layer parameters and the query $q$, the adapter is capable of extracting task-relevant representations from the activation map $\mathbf{z}$.

For more flexible learning scenarios the authors present Open-InCA, where each class prediction is computed via a separately learned query and class-head parameters,
\begin{align*}
    \mathbf{e} &= \operatorname{cross-attention}_{\theta_1}(\mathbf{z}, [q_1, \dots q_{C}]) \\
    \mathbf{y} &=  \operatorname{Diag-Linear}_{\theta_2}(\mathbf{e}).
\end{align*}
Here $\operatorname{Diag-Linear}_{\theta_2}$ is a mapping that takes the query embeddings $[e_1, \ldots, e_C]$ and acts on them ``diagonally" through the classification vectors $[v_1, \ldots, v_C]$:
\[ y_i := v_i \cdot e_i. \]
We follow the Open-InCA approach which is originally defined for class incremental learning, but for our settings we would like to achieve complete isolation of the learning of each adapter.

\paragraph{Adapter isolation.}
To achieve isolated learning of each adapter we apply Open-InCA's ``query only learning'' \cite{inca}. This amounts to not optimizing the cross-attention layer weights $\theta_1$ and only training $[q_1, \dots q_c]$ and the separate classification head parameters $[v_1, \dots v_c]$. Nonetheless, applied with the traditional $\operatorname{softmax}$ and Cross-Entropy loss, query only training \emph{still} leaks information of each sample in the batch to each adapter. Recall that $\softmax$ is defined as \[
[\operatorname{softmax}(\mathbf{y})]_i = \frac{\exp (y_i)}{\sum_{k=1}^{C} \exp(y_k)}.
\]
The $\operatorname{softmax}$'s gradient of each class prediction will be affected by the predictions of all other adapters (due to the denominator), thereby leaking information. To rectify this, we replace the $\operatorname{softmax}$ with a sigmoidal mapping $\sigma(y_i)=\frac{1}{\exp(-y_i) + 1}$, so the loss of each adapter prediction relies solely on the adapter's logit value $y_i$. Using the sigmoidal mapping enables us to use \name in more flexible learning settings. This is because now each node in the shard graph can be responsible for learning just a single binary classifier for a particular class. 

Transitioning to Open-InCA with sigmoidal loss results in hundreds or even thousands of individually learned binary classifiers. Trained sequentially the vast number of adapters will result in prohibitively long training time. 
\begin{figure}[b]
    \centering
\includegraphics[width=8cm]{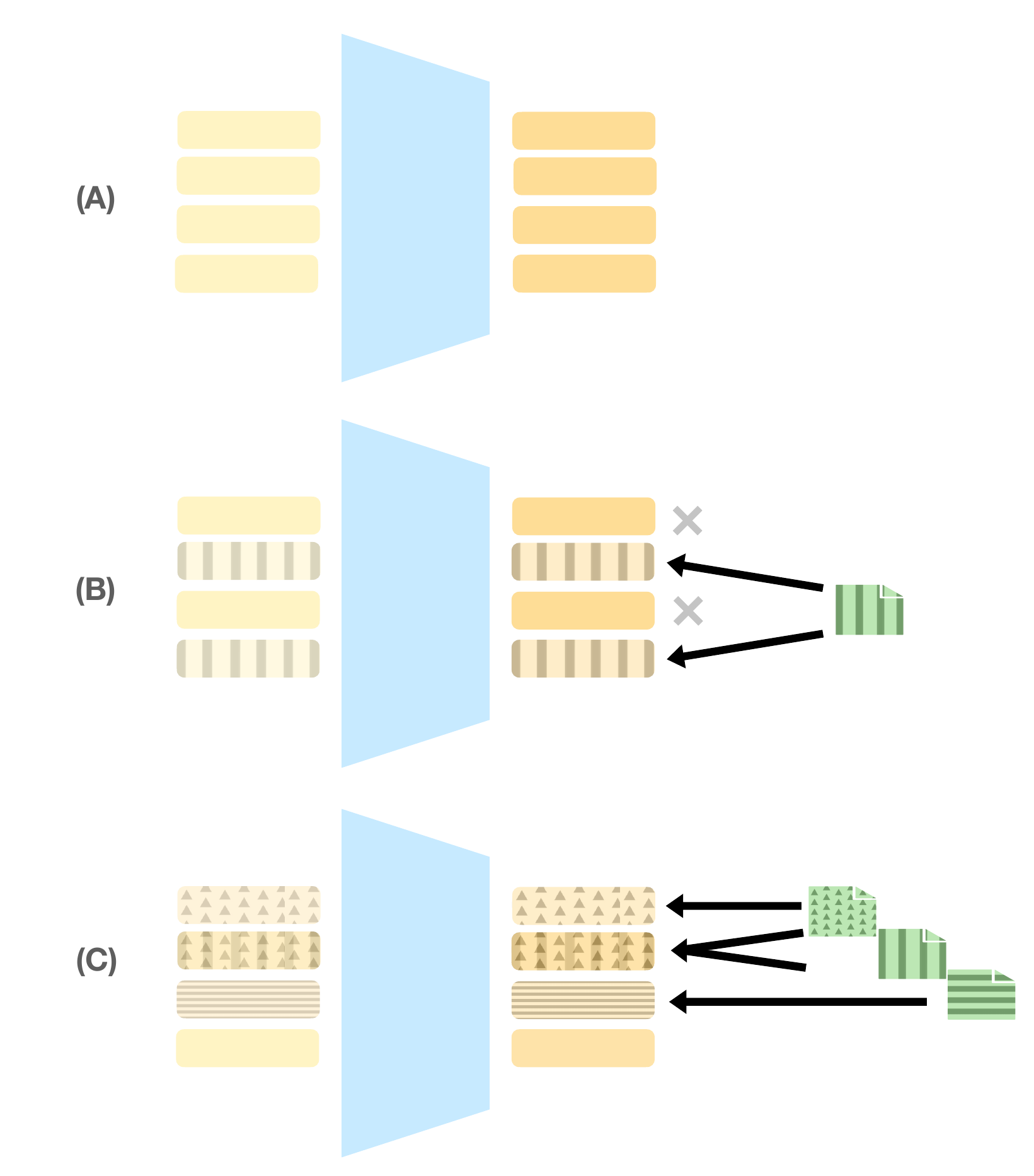}
    \caption{\textbf{SAFE Compartmentalized training with Open-InCA} We illustrate the use of Open-InCA for efficient and parallelized training with SAFE. \textbf{(A)} In Open-InCA, each learned binary classifier corresponds to its own dedicated query (yellow) and class-head (orange), in blue is the shared frozen cross-attention module. \textbf{(B)} Given a data-sample (green), we may control which adapters can be updated with its gradient, and based on the architectural choice there will be no information leakage. \textbf{(C)} Using a batch of data-samples and a graph topology of data access rights, we can learn all adapters in parallel using loss masking which automatically blocks and routes gradients to their appropriate adapters.}
    \label{fig:my_label}
\end{figure}

\paragraph{Loss masking.} 
Instead we show that we can learn all of the adapters within an Open-InCA adapter in parallel while still isolating information via loss masking. As stated in Eq. \eqref{eq:inca_composition} of the manuscript the different queries of the Open-InCA adapter can be de-composed as
\[
\ca_\theta(\mathbf{z}, [\mathbf{q}, \mathbf{q}']) = [\ca_\theta(\mathbf{z}, [\mathbf{q}]), \ca_\theta(\mathbf{z}, [\mathbf{q}'])].
\]
By using the sigmoid mapping we compute the loss for each Open-InCA binary-class adapter as 
\begin{align*}
L_{i,j} =& -\hat{y}_j^{(i)} \cdot \log (\sigma(\ca_\theta(\mathbf{z}, [\mathbf{q}_i])) \\
& - (1-\hat{y}_j^{(i)}) \cdot \log (1-\sigma(\ca_\theta(\mathbf{z}, [\mathbf{q}_i])),
\end{align*}
where $\hat{y}_j^{(i)}$ is 1 if example $j$ is a positive example for the class corresponding to adapter $i$, and $0$ otherwise.
For a batch of $B$ samples and a set of $M$ adapters, we obtain the loss matrix $L \in \mathbb{R}^{M \times B}$. Each loss entry $L_{i,j}$ is computed with frozen cross-attention parameters and the gradients are based on the learnable parameters of only the $i$th adapter (due to $\sigma$):
\[
\frac{\partial L_{i,j}}{\partial q_k} = 0 ~ \text{for } k \ne i.
\]
This implies any summation with $\{a_i\} \in \mathbb{R}^{M}$
\[
\ell = \sum_{k=1}^{M} a_k L_{k,j}
\]
has the property of reducing to a single term after differentiation 
\[
\frac{\partial L}{\partial q_i}  = a_i \cdot \frac{\partial L_{i,j}}{\partial q_i}.
\]
This is highly convenient in automatic differentiation packages, as it reduces our entire optimization (optimizing each adapter) into a single ``backwards'' call with the summed loss $\ell$. 
In particular, for a given SG topology we can define $A$ for a data batch with $A \in \{0,1\}^{M \times B}$ such that 
\[
A_{ij} =
\begin{cases}
    1 & \text{if shard } i \text{ has access to data sample } j \\
    0  & \text{otherwise}
\end{cases}
\]
With this we can optimize the entire SAFE ensemble in parallel in a  compartmentalized manner by directly optimizing the objective $\ell = \sum(L \odot A)$ where $\odot$ denotes the Hadamard product and $\sum$ is computed over all the matrix entries. 

\paragraph{DP-SAFE.}
For the mixed-privacy DP-SAFE method we can follow a similar process in which we compute a private loss and a non-private loss using 2 separate masking rules defined via $A_{\text{DP}} \in \{0,1\}^{M\times B}$ and $A_{\text{direct}} \in \{0,1\}^{M\times B}$ accompanied by respective optimizers DP-SGD and AdamW.

\begin{figure*}
    \centering
    \includegraphics[width=\linewidth]{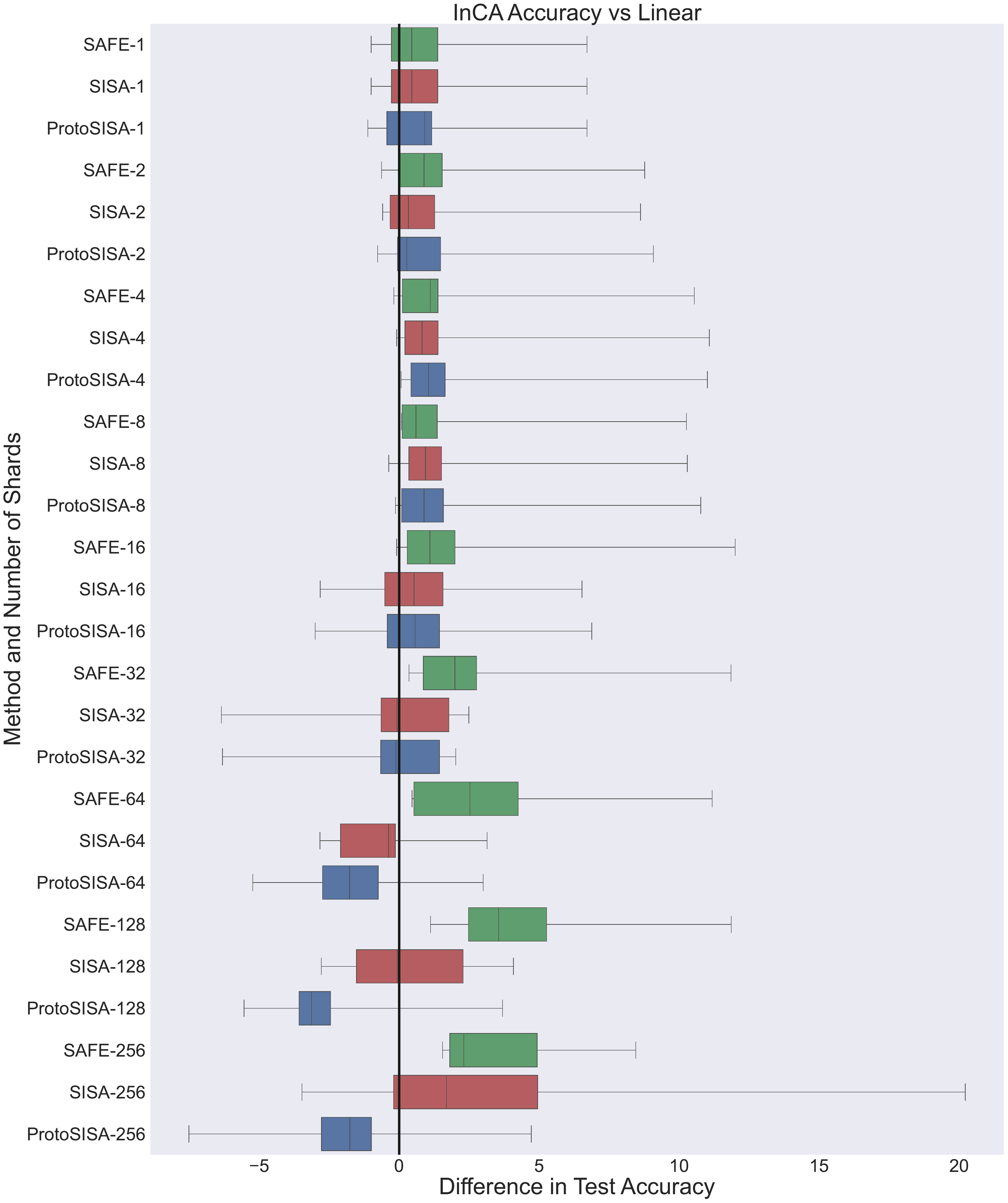}
    \caption{\textbf{InCA vs. Linear.}  We report the difference in test accuracy between using the InCA model and using a linear model for the methods SAFE, SISA, and ProtoSISA at sharding scales $1, 2, 4, 8, 16, 32, 64, 128, 256$.  Positive values correspond to InCA having higher accuracy, negative values correspond to the linear model having higher accuracy.  Each box corresponds to 7 values, specifically the test accuracy over the 7 datasets we consider.  The boxes corresponding to \name, SISA, and ProtoSISA are colored green, red, and blue respectively.}
    \label{fig:inca_vs_linear}
\end{figure*}

\section{Additional Ablations}
\paragraph{Additional Sharding Scales.}
In \cref{tab:safe_shard_nums}, \cref{tab:sisa_shard_nums}, and \cref{tab:protosisa_shard_nums} we report the accuracies of \name, SISA, and ProtoSISA across a larger range of sharding scales, namely $2, 4, 8, 16, 32, 64, 128, 256$.  We see that \name outperforms SISA on average across all sharding scales, with the gap peaking at $14.3\%$ at $256$ shards.  Furthermore while the ProtoSISA method improves over SISA at the large sharding scales $\geq 64$, it still underperforms \name by a margin of $(3.8\%-10.1\%)$.

\paragraph{\name without prototypes.}
\name benefits from both synergy-aware sharding and the inductive bias of the prototype model.  To separate out these two factors, we consider the \name method without prototypes, which we call ``NoprotoSAFE".  In \cref{fig:noproto_safe_comps} we compare the accuracy of \name, Noproto\name, and SISA.  We see that at the large sharding scales, the use of prototypes in \name gives a modest boost in test accuracy $0.5-2.5\%$ over Noproto\name.  We see that Noproto\name still significantly outperforms SISA, with margins as high as $10-30\%$, particularly at the large sharding scales $\geq 64$.  We conclude that the synergy-aware sharding of the SAFE method is the primary factor for the boost in performance over SISA. 

\begin{figure*}[ht]
    \centering
    \includegraphics[width=0.9\linewidth]{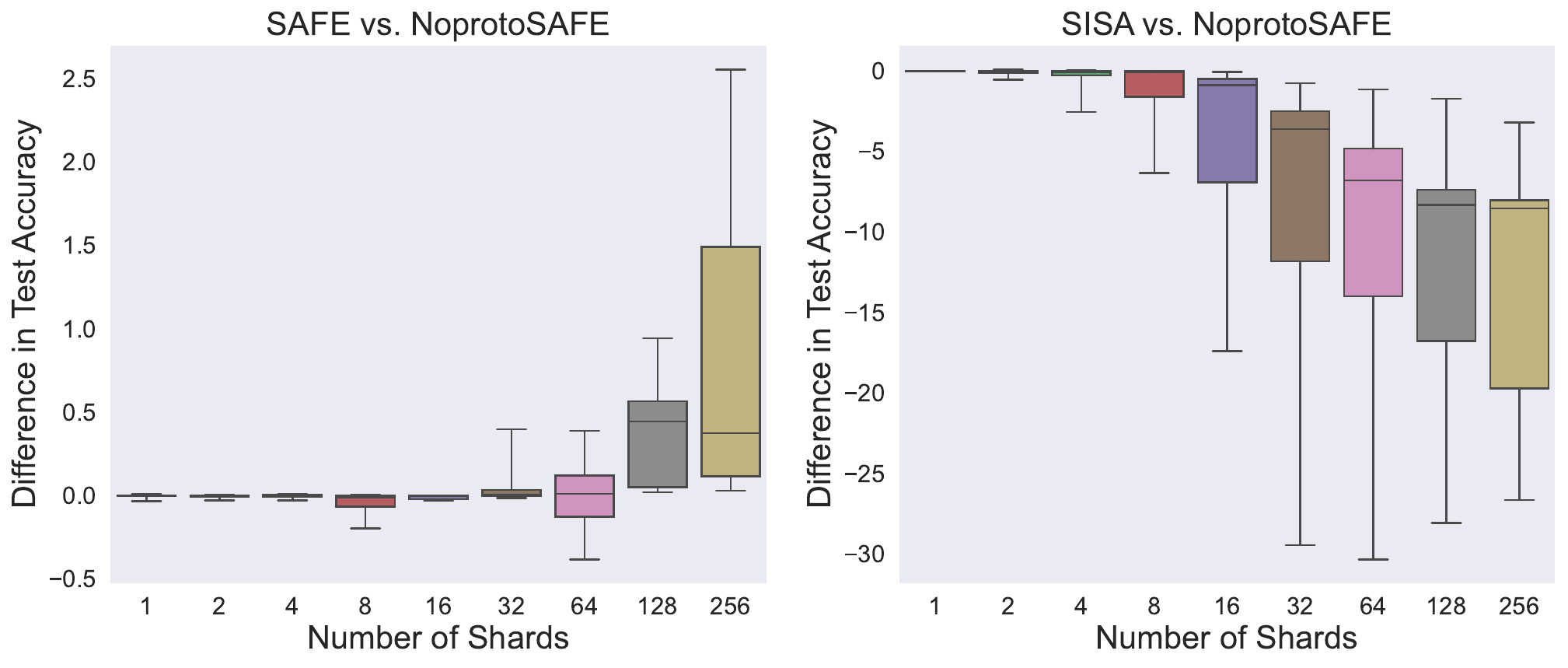}
    \caption{\textbf{\name without prototypes.} We provide comparisons against \name with and without applying prototypes.  SAFE without applying prototypes is dubbed ``NoprotoSAFE".  Each box in the subplots corresponds to 7 values, one for each of the datasets.  \textbf{(Left) \name vs. Noproto\name.} We report the difference in test accuracy between \name and Noproto\name.  Positive values correspond to \name outperforming, negative values correspond to Noproto\name outperforming.  \textbf{(Right) SISA vs. Noproto\name.}  We report the difference in test accuracy between SISA and Noproto\name.  Positive values correspond to SISA outperforming, negative values correspond to Noproto\name outperforming.}
    \label{fig:noproto_safe_comps}
\end{figure*}

\paragraph{\name and SISA using a linear model.}
Another efficient approach to adapting large pretrained models is to perform head-only finetuning, where only a linear classifier head is trained.  Thus a natural question is how do \name, SISA, and ProtoSISA perform when replacing the InCA adapters with linear models.  In \cref{fig:inca_vs_linear} we report the gain in test accuracy when using the InCA model relative to a linear model at different sharding scales for the various methods.  We see that for the \name method the InCA model uniformly outperforms the linear model at all sharding scales.  For SISA and ProtoSISA, at the larger sharding scales $\geq 64$ the linear model starts to outperform in some cases.  We suspect that SISA and ProtoSISA have difficulty training the cross-attention module when there are very few examples per shard, whereas the synergistic sharding of \name enables it to successfully train a more complex model.

\section{Analysis of the Shard Graph}
In Section \ref{sec:safe}, we theoretically analyze the expected forgetting costs of \name for different shard graph topologies. To understand the effect of different topologies, we consider a graph with a determined connectivity level, defined by assigning each node in $V$ $d$ outbound edges. When presenting \name, we concluded that when using a disjoint clique structure for the Shard Graph the expected cost to forget a sample $x$ (i.e. the number of samples required for re-training) is 
$\mathbb{E}|M_x| = d \cdot |S|$.  We note this is the optimal case and if the degree $d$ connectivity is applied uniformly at random, $\mathbb{E}|M_x|$ is in fact an order of magnitude larger $\mathbb{E}|M_x| \sim d^2 \cdot |S|$.  Below we present a formal statement and proof that $\mathbb{E}|M_x| \sim d^2 \cdot |S|$ for a graph with random connectivity.

\begin{theorem}
Suppose we have a set of nodes $V = \{S_1, \ldots, S_n\}$ where each $S_i$ is a source of data and $S_i \cap S_j = \emptyset$ for $i \neq j$.  Furthermore assume that all the sources are of the same size, i.e. $|S_1| = |S_2| = \cdots = |S_n|$.  Suppose each node is assigned $d$ outbound edges independently and uniformly (in addition to the default self-connection).  Furthermore assume that $d^2 \leq C |V|^{1/2}$ for some $C > 0$.  Then the expected number of samples $|M_x|$ needed for retraining upon a forget request for a sample $x$ is
\[ \mathbb{E}|M_x| = \Theta(|S_1| d^2).  \]
\end{theorem}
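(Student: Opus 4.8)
The plan is to reduce the claim to a purely combinatorial count of \emph{distinct affected nodes} and then pin that count down by a first-moment upper bound together with a second-moment (inclusion--exclusion) lower bound. First I would fix the forgotten sample $x \in S_i$ and let $P = \{S_j : S_i \in N(S_j)\}$ be the set of shards whose adapter must be retrained (the ``predecessors'' of $S_i$, which always contains $S_i$ itself by the self-connection). The data to be revisited is $M_x = \bigcup_{S_j \in P} \bigcup N(S_j)$, so writing $\mathcal{R} = \bigcup_{S_j \in P} N(S_j)$ for the set of distinct nodes touched, disjointness together with the equal-size assumption gives $|M_x| = |S_1|\,|\mathcal{R}|$. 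Hence it suffices to prove $\mathbb{E}|\mathcal{R}| = \Theta(d^2)$.

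For the upper bound, which needs no hypothesis on $d$, I would use subadditivity of cardinality: $|\mathcal{R}| \le \sum_{S_j \in P} |N(S_j)| = (d+1)\,|P|$, since every outbound neighborhood has exactly $d+1$ members (the $d$ random targets plus the self-loop). Computing $\mathbb{E}|P|$ by linearity --- each of the $n-1$ other nodes points to $S_i$ independently with probability $d/(n-1)$, and $S_i$ is always present --- gives $\mathbb{E}|P| = 1 + d$, so $\mathbb{E}|\mathcal{R}| \le (d+1)^2 = O(d^2)$.

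The lower bound is where the hypothesis $d^2 \le C|V|^{1/2}$ enters. For each node $S_\ell$ let $m_\ell = |\{S_j \in P : S_\ell \in N(S_j)\}|$ be its multiplicity, so $\sum_\ell m_\ell = (d+1)|P|$ and $|\mathcal{R}| = \#\{\ell : m_\ell \ge 1\}$. The elementary Bonferroni bound $\mathbf{1}[m \ge 1] \ge m - \binom{m}{2}$ yields
\[ |\mathcal{R}| \;\ge\; (d+1)|P| \;-\; \sum_{\{S_j, S_{j'}\} \subseteq P} |N(S_j) \cap N(S_{j'})|, \]
reducing the lower bound to showing that the expected total pairwise overlap of predecessor neighborhoods is negligible. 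I would estimate this by noting there are $O(d^2)$ predecessor pairs in expectation, while two independently chosen neighborhoods of size $d$ drawn from $n$ nodes share $O(d^2/n)$ nodes in expectation; this gives an expected correction of order $d^4/n = d^4/|V|$, which the hypothesis forces to be $O(1)$ and hence negligible relative to $(d+1)^2$. Combined with $\mathbb{E}[(d+1)|P|] = (d+1)^2$ this gives $\mathbb{E}|\mathcal{R}| \ge (d+1)^2 - O(1) = \Omega(d^2)$, and together with the upper bound $\mathbb{E}|\mathcal{R}| = \Theta(d^2)$, whence $\mathbb{E}|M_x| = \Theta(|S_1| d^2)$.

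The step I expect to be the main obstacle is the expected-overlap estimate, because predecessor membership of $S_j$ and $S_{j'}$ is itself correlated with the random outbound edges that determine $N(S_j) \cap N(S_{j'})$, and one must carefully separate the genuinely random targets from the deterministic self-loops (and from the possibility that $S_i$ itself is a common neighbor) while keeping every contribution at order $d^4/|V|$. This conditioning is exactly what the assumption $d^2 \le C|V|^{1/2}$ is tuned to control: it guarantees the birthday-type collisions among the $\Theta(d^2)$ sampled targets are lower order, so the union $\mathcal{R}$ does not collapse below $\Theta(d^2)$.
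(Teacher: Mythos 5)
Your reduction $|M_x| = |S_1|\,|\mathcal{R}|$ and your first-moment upper bound (in-degree is $\mathrm{Bin}(|V|-1, \tfrac{d}{|V|-1})$ plus the self-loop, so $\mathbb{E}|P| = d+1$ and $\mathbb{E}|\mathcal{R}| \le (d+1)^2$) coincide exactly with the paper's. The lower bound, however, contains a genuine quantitative error. You estimate the Bonferroni correction $\sum_\ell \binom{m_\ell}{2} = \sum_{\{S_j,S_{j'}\}\subseteq P} |N(S_j)\cap N(S_{j'})|$ as $O(d^2)\cdot O(d^2/n) = O(d^4/n) = O(1)$, treating predecessor neighborhoods as independently drawn. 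But $S_j \in P$ \emph{means} $S_i \in N(S_j)$, so every pair of predecessors shares $S_i$ as a common neighbor: conditional on $\{S_j, S_{j'}\} \subseteq P$, the overlap is at least $1$, never $O(d^2/n)$. Concretely, $m_i = |P|$, so the correction is at least $\binom{|P|}{2}$, whose expectation is roughly $d^2/2 + d$ --- the same order as your main term $(d+1)^2$. Hence the claimed bound $\mathbb{E}|\mathcal{R}| \ge (d+1)^2 - O(1)$ does not follow as written. You flagged ``the possibility that $S_i$ itself is a common neighbor'' as the main obstacle, but then asserted every contribution stays at order $d^4/|V|$, and that assertion is exactly where the argument breaks.

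The good news is that the repair is local: exclude the coordinate $\ell = i$ from the count. Since each predecessor's neighborhood contains $S_i$ exactly once, $\sum_{\ell \neq i} m_\ell = d\,|P|$ with expectation $d(d+1)$, and conditional on a pair of predecessors the sets $N(S_j)\setminus\{S_i\}$ and $N(S_{j'})\setminus\{S_i\}$ consist of the node itself plus $d-1$ independent uniform targets, so the residual collision term really is $O(d^4/n) = O(C^2)$ by the hypothesis $d^2 \le C|V|^{1/2}$; this yields $\mathbb{E}|\mathcal{R}| \ge 1 + d(d+1) - O(C^2)$, which is $\Omega(d^2)$ for $d$ above a threshold depending on $C$, and the deterministic bound $|\mathcal{R}| \ge |N(S_i)| = d+1$ covers bounded $d$. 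With that fix, your route is genuinely different from the paper's: the paper conditions on the in-degree landing in $[(1-t)d,(1+t)d]$ via Chebyshev (probability $\ge 1/4$ for $d \ge 2$), then lower-bounds by a ratio-of-binomial-coefficients count the probability, $e^{-4C^2} + o(1)$, that \emph{all} predecessors' out-neighborhoods minus $S_1$ are pairwise disjoint, on which event $Z = kd\,|S_1|$ exactly. Your expectation-level inclusion--exclusion avoids both the concentration step and the disjointness event, and once the $S_i$ term is handled correctly it gives cleaner constants than the paper's $e^{-4C^2}$; but as submitted the overlap estimate is wrong by a $\Theta(d^2)$ term.
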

\begin{proof}
Assume we receive a forget request for a sample $x \in S_i$.  By uniformity without loss of generality we may assume that $i = 1$.  We will let $Z$ be a random variable denoting the number of samples we must retrain on whenever receiving a forget request for a sample $x \in S_1$ where the randomness is taken over the selection of the edges $E$ of the graph.  For a node $n$ let $N_{in}(n) = \{v : (v, n) \in E\}$ denote the inbound neighborhood of $n$ and let $N_{out}(n) = \{ v : (n, v) \in E\}$ denote the outbound neighborhood of $n$.  Then we have that
\[ Z = \abs{\bigcup_{n \in N_{in}(S_1)} \bigcup_{v \in N_{out}(n)} v \setminus \{x\}}. \]
We note then that
\begin{align*}
Z &\leq \sum_{n \in N_{in}(S_1)} \sum_{v \in N_{out}(n)} |S_1| \\
&\leq |N_{in}(S_1)| (d + 1) |S_1|.
\end{align*}
Therefore
\[ \EE[Z] \leq |S_1| (d + 1) \EE[|N_{in}(S_1)|]\]
We note that for each node $v \in V \setminus S_1$ that $\mathbb{I}[v \in N_{in}(S_1)]$ is a Bernoulli random variable taking the value $1$ with probability $\frac{d}{|V| - 1}$.  Furthermore by default we know that $S_1 \in N_{in}(S_1)$.  Therefore
\[ \abs{N_{in}(S_1)} - 1 = \sum_{v \in V \setminus S_1} \mathbb{I}[v \in N_{in}(S_1)] \]
is a sum of $|V| - 1$ independent Bernoulli random variables.  Thus $|N_{in}(S_1)| - 1$ obeys the Binomial distribution $Bin(|V| - 1, \frac{d}{|V| - 1})$ which has mean $d$.  Thus
\[ \EE[|N_{in}(S_1)|] = d + 1 \]
and we conclude that
\[\EE[Z] \leq |S_1| (d + 1)^2 = O(|S_1| d^2). \]
\par
Now we will prove the lower bound.  We note for $d = 1$ the statement is trivial so we might as well assume $d \geq 2$.  Since we are seeking a lower bound we may assume that after the forget request is received the entire source $S_1$ is dropped, as this only decreases the number of samples needed to retrain.  In this case
\[ Z = \abs{\bigcup_{n \in N_{in}(S_1) \setminus S_1} \bigcup_{v \in N_{out}(n) \setminus S_1} v}. \]
We will focus on estimating the conditional expectation
\[ \EE[Z \vert \hspace{2mm} |N_{in}(S_1) \setminus S_1| = k]. \]
We note that by uniformity that the distribution of $Z$ depends only on the size of $N_{in}(S_1) \setminus S_1$ and not the specific collection of nodes in $N_{in}(S_1) \setminus S_1$.  Thus we will fix a choice $\{n_1, \ldots, n_k\}$ of $k$ nodes for $N_{in}(S_1) \setminus S_1$.  Let $E$ denote the event that $N_{in}(S_1) \setminus S_1 = \{n_1, \ldots, n_k\}$ and let $\PP_E(\bullet) := \PP(\bullet | E)$ denote the probability of an event conditioned on $E$.  We note then that
\[ \EE[Z \vert \hspace{2mm} |N_{in}(S_1) \setminus S_1| = k] = \EE[Z \vert E]. \]
  Let $A_i = N_{out}(n_i) \setminus S_1$ for $i = 1, \ldots, k$.  We note that if $A_1, A_2, \ldots, A_k$ are disjoint then $Z = kd \cdot |S_1|$.  Thus we have that
\begin{align*}
&\EE\big[Z \,\big\vert \ \, E \big] \\
&\hspace{2em}\geq kd \cdot |S_1| \cdot \PP_E(A_i \cap A_j = \emptyset \hspace{2mm} \forall 1 \leq i \neq j \leq k).  
\end{align*}
Thus we proceed to lower bound
\[ \PP_E(A_i \cap A_j = \emptyset \hspace{2mm} \forall 1 \leq i \neq j \leq k). \]
The number of disjoint choices of $A_1, A_2, \ldots, A_k$ is
\[ \prod_{j = 0}^{k - 1} \binom{|V| - (k + 1) - j (d - 1)}{d - 1}. \]
The number of total choices of $A_1, A_2, \ldots, A_k$ is
\[ \binom{|V| - 2}{d - 1}^k. \]
Thus we have
\begin{gather*}
\PP_E(A_i \cap A_j = \emptyset \hspace{2mm} \forall 1 \leq i \neq j \leq k) \\
\geq \brackets{\frac{\binom{|V| - (k + 1) - (k - 1)(d - 1)}{d - 1}}{\binom{|V| - 2}{d - 1}}}^k.
\end{gather*}
Now assume that $k \in [(1 - t)d, (1 + t)d]$ for some fixed $t \in (0, 1)$.  Note then that
\begin{align*}
\brackets{\frac{\binom{|V| - (k + 1) - (k - 1)(d - 1)}{d - 1}}{\binom{|V| - 2}{d - 1}}}^k 
&\geq \brackets{\frac{(|V| - kd)^{d - 1}}{(|V| - 2)^{d - 1}}}^{k} \\
&\geq \brackets{\frac{(|V| - kd)^{d - 1}}{|V|^{d - 1}}}^{k} \\ 
&\geq \brackets{\frac{|V| - kd}{|V|}}^{kd} \\
&= \brackets{1 - \frac{kd}{|V|}}^{kd}.
\end{align*}
Now using the fact that $d^2 \leq C|V|^{1/2}$ and $k \leq (1 + t) d \leq 2d$, we have that
\begin{align*}
\brackets{1 - \frac{kd}{|V|}}^{kd} 
&\geq \brackets{1 - \frac{2 d^2}{|V|}}^{2 d^2} \\
&\geq \brackets{1 - \frac{2 C}{|V|^{1/2}}}^{2 C |V|^{1/2}} \\
&= \brackets{1 - \frac{4 C^2}{2 C |V|^{1/2}}}^{2 C |V|^{1/2}}
\\ &= e^{-4 C^2} + o(1) = \Omega(1).
\end{align*}
Thus it follows for $k \in [(1 - t)d, (1 + t)d]$ we have that
\begin{gather*}
\EE[Z \vert \hspace{2mm} |N_{in}(S_1) \setminus S_1| = k] \\
\geq |S_1| d \cdot k \cdot \PP_E(A_i \cap A_j = \emptyset \hspace{2mm} \forall 1 \leq i \neq j \leq k) \\
= \Omega((1 - t) |S_1| d^2).    
\end{gather*}
We note it then suffices to show that for some fixed $t \in (0, 1)$
\[ \PP\parens{|N_{in}(S_1) \setminus S_1| \in [(1 - t)d, (1 + t)d]} = \Omega(1). \]
We recall that for each node $v \in V \setminus S_1$ that $\mathbb{I}[v \in N_{in}(S_1)]$ is a Bernoulli random variable taking the value $1$ with probability $\frac{d}{|V| - 1}$.  Thus 
\[ \abs{N_{in}(S_1) \setminus S_1} = \sum_{v \in V \setminus S_1} \mathbb{I}[v \in N_{in}(S_1)]  \]
is a sum of $|V| - 1$ independent Bernoulli random variables.  Thus $|N_{in}(S_1) \setminus S_1|$ obeys the Binomial distribution $Bin(|V| - 1, \frac{d}{|V| - 1})$ which has mean $d$ and variance $d (1 - \frac{d}{|V| - 1})$.  Now let $W = \abs{N_{in}(S_1) \setminus S_1}$.
Then by Chebyshev's inequality for $t > 0$
\[ \PP\parens{|W - d| \geq t d} \leq \frac{(1 - \frac{d}{|V| - 1})}{d t^2} \leq \frac{1}{d t^2}.\]
We note since $d \geq 2$ we can choose $t = \frac{\sqrt{2}}{\sqrt{3}}$ so that $\frac{1}{d t^2} \leq \frac{3}{4}$.  Thus with probability at least $1/4$ we have that 
\[|N_{in}(S_1) \setminus S_1| \in [(1 - t)d, (1 + t) d].\]
It follows that $\EE[Z] = \Omega(|S_1| d^2)$.
\end{proof}

\begin{figure}
    \centering
    \includegraphics[width=0.8\linewidth]{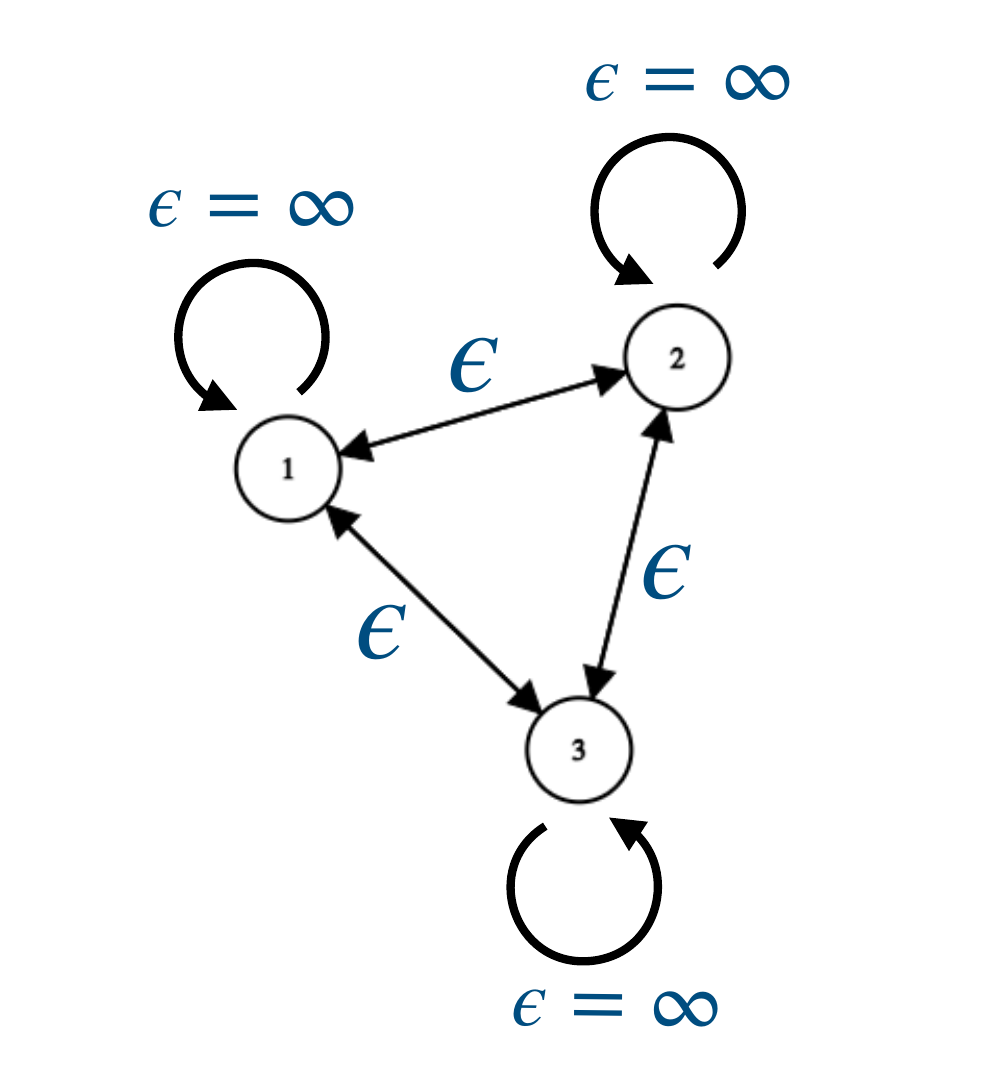}
    \caption{\textbf{Shard-Graph in \name-DP}: In \name-DP each shard when training adapters uses its own data non-privately, but enforces DP when using data from other shards. This can visualized as shard graphs with weighted edges, where the edge weight measures the privacy leakage between the interacting nodes.}
    \label{fig:safe_dp}
\end{figure}

\section{Stochastic forgetting}
Below we provide a general definition for probabilistic unlearning:

\begin{definition}
$(\alpha, \beta)-$sharded-unlearning: 
Consider an algorithm $\algo$ that, given a shard graph $G$ as input, outputs a model $\mathcal{A}(G)$ trained on the shards of $G$.  If $G'$ is a shard graph that can be obtained by removing data or nodes from $G$, we write $G' \preccurlyeq G$.  We say that $U$ is an $(\alpha, \beta)$-unlearning algorithm for $\algo$ if for all $G' \preccurlyeq G$ and events $E$
\[
\mathbb{P}(U(\algo(G), G') \in \Theta) \leq  e^{\alpha} \mathbb{P}(\algo(G') \in \Theta) + \beta.
\]
 \label{def:unlearn_graph}
\end{definition}

Note that the definition bares resemblance to the definition of differential privacy, which says that:
\begin{definition}\cite{dwork2014algorithmic}
$(\epsilon, \delta)-$differential privacy:
An algorithm $\algo$, is said to be $(\epsilon, \delta)-$DP for $\epsilon>0$ and $\delta \ll 1$, if for all adjacent datasets $D, D'$ (such that $D$ and $D'$ differ in at most one sample), and all possible events $\Theta$, the following relation holds: 
\[
\mathbb{P}(\algo(D) \in \Theta) \leq  e^{\epsilon} \mathbb{P}(\algo(D') \in \Theta) + \delta.
\]
\end{definition}

Training a model with a DP algorithm (for instance DP-SGD \cite{abadi2016deep}), enables us to perform free ($\alpha, \beta$)-unlearning for 1-forget request, by choosing $\alpha=\epsilon$ and $\beta=\delta$. ``Free" here means that the model need not be changed after a forgetting request as it still satisfies the $(\alpha, \beta)-$unlearning guarantee. However, as we get sequential forgetting requests ($\geq 1)$, the privacy bound weakens resulting in more leakage. To understand the effectiveness of differential privacy in forgetting, we need to capture its exact behaviour $(\epsilon, \delta)$ when provided with $k$ sequential forgetting requests. More precisely we get the following result for group differential privacy:

\begin{theorem}\cite{dwork2014algorithmic}
Group privacy:
Let $\algo$, be an $(\epsilon, \delta)-$DP algorithm. Then for all $D, D'$, such that $D$ and $D'$ differ in at most $k$ samples, we get the following result: 
\[
\mathbb{P}(\algo(D) \in \Theta) \leq  e^{\epsilon_g} \mathbb{P}(\algo(D') \in \Theta) + \delta_g, 
\] where $\epsilon_g=k\epsilon$ and $\delta_g=\dfrac{e^{k
\epsilon}-1}{e^{\epsilon}-1}\delta$
\label{thm:groupdp}
\end{theorem}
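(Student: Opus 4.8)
The plan is to prove this by a hybrid (telescoping) argument, reducing the $k$-sample statement to $k$ applications of the single-sample $(\epsilon,\delta)$-DP guarantee. Since $D$ and $D'$ differ in at most $k$ samples, I would first construct a finite sequence of datasets $D = D_0, D_1, \ldots, D_k = D'$ in which each consecutive pair $D_{i}, D_{i+1}$ is adjacent, i.e.\ differs in at most a single sample. Such a chain always exists: one edits the differing samples one at a time, so that each intermediate $D_i$ is a valid dataset and each step changes exactly one entry. This is the only place where the hypothesis ``differ in at most $k$ samples'' is used.

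Next I would invoke the definition of $(\epsilon,\delta)$-DP along each link of the chain. For every $i$ and every event $\Theta$, the adjacency of $D_i$ and $D_{i+1}$ gives
\[
\mathbb{P}(\algo(D_i) \in \Theta) \leq e^{\epsilon}\, \mathbb{P}(\algo(D_{i+1}) \in \Theta) + \delta.
\]
Substituting this bound into itself $k$ times (unrolling the recursion from $i=0$ to $i=k-1$) yields
\[
\mathbb{P}(\algo(D) \in \Theta) \leq e^{k\epsilon}\, \mathbb{P}(\algo(D') \in \Theta) + \delta \sum_{j=0}^{k-1} e^{j\epsilon},
\]
where the accumulated additive error is a geometric sum because each back-substitution multiplies the running $\delta$ term by an extra factor of $e^{\epsilon}$ while compounding the coefficient of the leading probability.

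Finally I would evaluate the geometric series $\sum_{j=0}^{k-1} e^{j\epsilon} = \frac{e^{k\epsilon}-1}{e^{\epsilon}-1}$, which identifies $\epsilon_g = k\epsilon$ and $\delta_g = \frac{e^{k\epsilon}-1}{e^{\epsilon}-1}\,\delta$, exactly as claimed. The argument is otherwise entirely mechanical, so I do not expect a genuine conceptual obstacle; the one point requiring care is the bookkeeping in the unrolling step, namely verifying that the coefficient of $\mathbb{P}(\algo(D') \in \Theta)$ compounds multiplicatively to $e^{k\epsilon}$ while the $\delta$ contributions accumulate additively with geometrically growing weights. I expect that telescoping bookkeeping, rather than any subtlety in the DP definition itself, to be the main thing to get right.
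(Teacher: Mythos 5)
Your proposal is correct: the chain $D = D_0, D_1, \ldots, D_k = D'$ of adjacent datasets, the $k$-fold unrolling of the $(\epsilon,\delta)$-DP inequality, and the geometric sum $\delta\sum_{j=0}^{k-1} e^{j\epsilon} = \frac{e^{k\epsilon}-1}{e^{\epsilon}-1}\,\delta$ constitute exactly the standard group-privacy argument. The paper does not reproduce a proof but simply imports the result from the cited reference, and your argument is the same telescoping proof found there, so there is nothing to fix.
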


We observe that the group privacy result, i.e. $(\epsilon_g, \delta_g)=\big(e^{k\epsilon}, \dfrac{e^{k
\epsilon}-1}{e^{\epsilon}-1}\delta\big)$, weakens with increasing $k$. This necessitates the notion of a privacy budget $(\alpha_b, \delta_b)$ which is an upper bound for the privacy leakage accumulation with successive forgetting requests. This budget can be chosen by the user, or in some cases, constrained by the fact that $\dfrac{e^{k
\epsilon}-1}{e^{\epsilon}-1}\delta \leq 1$.

Differential privacy can detrimentally reduce the utility (accuracy) of a model. Hence, blindly using DP-SGD (or any another DP method) during training may result in models that, while strongly private (and hence not requiring frequent re-training) attain significantly lower accuracy. \name provides a simple yet effective unlearning mechanism which enables forgetting shards, by re-training the neighbouring contaminated shards (requires frequent yet reduced re-training, but higher accuracy). These observations inspire us to design an algorithm which combines \name and Stochastic Forgetting:

\textbf{\name-DP}:
For each shard $S_i$ in the shard graph $G$, we train a binary classification model which treats the data at node $S_i$ as the positive class (without privacy), and data at neighbouring nodes pointing to it as the negative classes (with privacy). When asked to forget a sample in $S_i$ we can simply drop the classifier for the shard $S_i$ (later re-train if mandated by the privacy cost of the budget), without having to worry about re-training models corresponding to other shards which used $S_i$ as a negative class (since the data of the negative shards was trained with DP).

Unlike the standard version of \name, where all the connections of shard $S_i$ (using $S_i$ for training) need to re-trained upon a single forget request, \name-DP allows us to keep the remaining neighboring shards while accounting for some cost to the privacy budget. By allowing each shard $S_i$ to use its own data non-privately, \name-DP also provides better utility compared to completely private models (trained with DP). While training with \name-DP, we compute the total $(\alpha, \beta)$ using the composition property of DP \cite{dwork2014algorithmic,mironov2017renyi,gopi2021numerical}. 

\textbf{Privacy Accounting:} The \name-DP algorithm is defined with an accompanying privacy level given by $(\epsilon, \delta)$, which sets the DP training parameters in \name-DP (using DP-SGD \cite{abadi2016deep,golatkar2022mixed}).  While the $(\alpha, \beta)$ parameters in the unlearning definition (\cref{def:unlearn_graph}) are dependent on the number of sequential forgetting requests $k$, \name-DP provides the following unlearning guarantees: $\alpha=k\epsilon$ and $\beta=\dfrac{e^{k
\epsilon}-1}{e^{\epsilon}-1}\delta$, where $\beta \leq 1$ (from \cref{thm:groupdp}).

Before the start of training, the user chooses a desired forgetting budget $(\alpha_b, \beta_b)$ (with $\beta_b \leq 1$), which measures the information contained (privacy leakage) about the user after forgetting her samples. The forgetting algorithm should respect the budget while providing non-vacuous guarantees. This provides us with the following bound for the number of unlearning request before full re-training: When $\delta \ll 1$ and $\delta \ll \beta_b$, we can approximate this result with 
\[k = \text{min}\Bigg(\dfrac{\alpha_b}{\epsilon}, \dfrac{\log\Big(\beta_b (e^{\epsilon}-1)/\delta+1\Big)}{\epsilon}\Bigg).\]
For our experiments we assume $\alpha_b=30$ and $\beta_b=1$.  We vary $\epsilon=\{1,2,3,4\}$ and $\delta={1\mathrm{e}{-10},1\mathrm{e}{-11},1\mathrm{e}{-12},1\mathrm{e}{-13}}$ and choose different values for $k$. Note that we can choose the values for $k$ by varying both $\epsilon$ and $\delta$ (ensuring $\delta \ll 1$) and choosing the best value by comparing the accuracy on a held out validation set.

\section{Additional Details of Architecture and Training}
\paragraph{Optimized graph structures.}
For our \name method, in \cref{sec:bilevel-sharding} we described the bi-level sharding structure where we split the dataset into a number $n_c$ ``coarse shards" via class balanced sub-sampling which are further split into $n_f$ ``fine shards"  via class partitioning.  The total number of shards is given by $n = n_c \cdot n_f$.  In \cref{tab:optimum_graph_structure} we report the choice of $(n_c, n_f)$ used for each dataset for each value of $n$ for the \name method.

\paragraph{Optimization.}
Whenever training InCA adapters, we train using AdamW for 30 epochs using cosine annealing and starting learning rate $\text{lr} = 0.05$; we use weight decay of $10^{-4}$.  Whenever training the linear model for the experiment in \cref{fig:inca_vs_linear} we lower the learning rate to $\text{lr}=3\mathrm{e}{-4}$ as we observed this increased performance.  For the linear model all other hyperparameters remain the same as when training the InCA adapter.

\paragraph{Architecture.}
When using InCA adapters, LayerNorm is applied to both the inputs and queries separately before they are passed through the cross-attention block.  A second LayerNorm is applied after the cross-attention before the final logits are computed as customary in ViTs.  While in general InCA adapters can be applied to any layer in the network \cite{inca}, in our experiments we always attach it to the penultimate layer, namely the end of block 22 (the input to \texttt{blocks.23.norm1}).  When using linear adapters for the experiment depicted in \cref{fig:inca_vs_linear}, we also apply LayerNorm before the fully connected layer.

\begin{figure}
    \centering
    \includegraphics[width=0.8\linewidth]{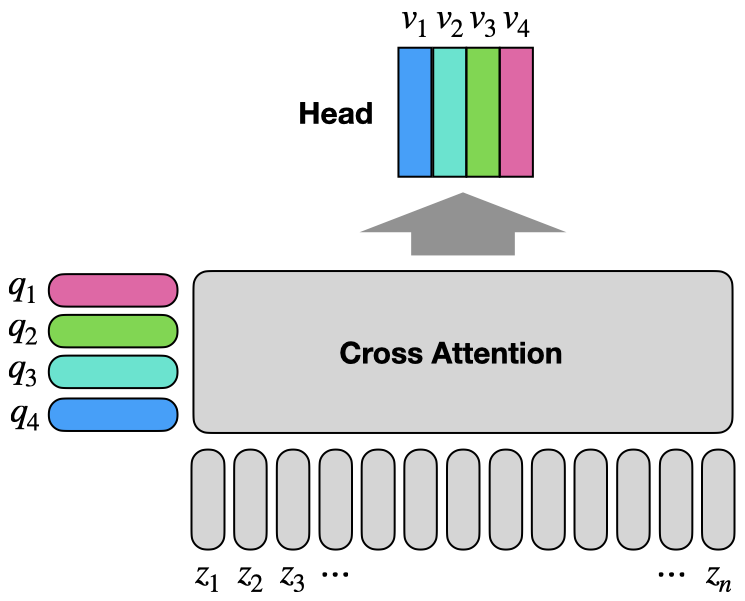}
    \caption{\textbf{The InCA Architecture.} Each class $i$ has a query $q_i$ and a head vector $v_i$.  The logit for class $i$ given by the inner product $v_i \cdot \operatorname{cross-attention}(\mathbf{z}, q_i)$ where $\mathbf{z} = (z_1, \ldots, z_n) = f_w(x)$ is the embedding extracted from a frozen pre-trained model.}
    \label{fig:open_inca}
\end{figure}

\paragraph{Prototype model.}
We note that the prototype model is only added at inference time, not training time, as adding the prototype model during training would expose each adapter to information from all other training samples, thus breaking the information compartmentalization.  When computing the prototypes $p_k$, we normalize the feature embeddings
\[ p_k = \frac{1}{N_c} \sum_{(x, y) \in D^{(k)}} \frac{f_w(x)}{\norm{f_w(x)}} \]
which makes the prototypes less sensitive to outliers and leads to better predictions.

\paragraph{Dataset details.}
In \cref{tab:suppl_datasets} we report the size of the training and testing splits and number of classes for the 7 datasets we consider, and links to access the data.
\begin{table*}[!t]
\resizebox{\textwidth}{!}{
\begin{tabular}{lcccl}
\toprule
     {\bf Dataset} & {\bf Training Images} & {\bf Testing Images} & {\bf \# Classes} & {\bf URL} \\
\hline
    Caltech-256 \cite{griffin_holub_perona_2022} & 15,418 & 15,189 & 257 & \footnotesize{\url{https://authors.library.caltech.edu/7694/}}\\
    CIFAR-100 \cite{Krizhevsky09learningmultiple} & 50,000 & 10,000 & 100 & \footnotesize{\url{https://www.cs.toronto.edu/~kriz/cifar.html}} \\
    CUB-200~\cite{WahCUB_200_2011} & 5,994 & 5,794 & 200 & \footnotesize{\url{https://www.vision.caltech.edu/datasets/cub_200_2011/}} \\
    DTD~\cite{cimpoi14describing} & 4,230 & 1,410 & 47 & \footnotesize{\url{https://www.robots.ox.ac.uk/~vgg/data/dtd/}} \\
    MIT-67~\cite{mit67recognizing} & 5,360 & 1,340 & 67 & \footnotesize{\url{https://web.mit.edu/torralba/www/indoor.html}}\\
    Stanford Cars~\cite{KrauseStarkDengFei-Fei_3DRR2013} & 8,144 & 8,041 & 196 & \footnotesize{\url{https://ai.stanford.edu/~jkrause/cars/car_dataset.html}}\\
    Stanford Dogs~\cite{KhoslaYaoJayadevaprakashFeiFei_FGVC2011} & 12,000 & 8,580 & 120 & \footnotesize{\url{http://vision.stanford.edu/aditya86/ImageNetDogs/}}\\
\bottomrule
\end{tabular}}
\caption{\textbf{Dataset Information.} We report the number of classes as well as the number of training and testing images for each dataset, as well as links to download the datasets.}
\label{tab:suppl_datasets}
\end{table*}

\section{Additional Related Work}
In our work we present \name as a method that flexibly learns a model composed of model parts that are learned using different data (as prescribed by the Shard Graph). This heterogeneous routing of data during training enables training the different model components in parallel and permits training hundreds of models quickly and efficiently. While we apply this approach for the problem of forgetting, recently ``heterogeneous data routing'' has also been the focus of much work in enabling better massive model scaling. In these works, ``heterogeneous data routing'' is used to route different data and activations to different model parts and distributing the computation to more computing nodes. Through distribution of computation, one can reduce the inference and training costs of foundation models and enable even more parameters than what is permissible by a single monolithic model \cite{Fedus2021SwitchTS}. In the work of \cite{Fedus2021SwitchTS} a large language model based on the transformer architecture is built with dynamic execution layers, where the model's intermediate activations are routed into disjoint layers based on their representations via ``switching layers''. This is generalized in the work of Pathways \cite{Barham2022PathwaysAD} that creates the necessary infrastructure to train such models on distributed computing systems and allows training state-of-the-art language models \cite{palm}. The latest developments \cite{gesmundo2023multipath,gesmundo2022munet} of this method apply heterogeneous propagation of data in connection with multi-task learning where ``agent networks'' cooperate with partner-agent representations to adapt and solve new tasks.  We note while both \cite{Fedus2021SwitchTS,Barham2022PathwaysAD} and our work utilize heterogeneous data routing, in our work data routing and compartmentalization is deterministic and is based on pre-specified data usage rules codified by the Shard Graph, as opposed to selecting the data routing based on the data's representations. Overall our work focuses on the problem of forgetting rather than multi-task learning and increasing the model scale.

\begin{table*}[t]
    \centering
 \resizebox{0.8\textwidth}{!}{
   \begin{tabular}{c|cccccccc}
 \toprule
    Dataset\textbackslash \ Num. Shards & 2 & 4 & 8 &   16 &   32 &   64 &  128 &  256  \\
 \midrule
 Caltech-256 & (2, 1) &  (4, 1) &  (4, 2) &  (4, 4) & (4, 8) & (4, 16) & (4, 32) & (8, 32) \\
   CIFAR-100 & (2, 1) & (4, 1) & (8, 1) & (8, 2) & (8, 4) & (16, 4) & (16, 8) & (16, 16)
  \\
     CUB-200 & (2, 1) & (4, 1) & (4, 2) & (8, 2) & (8, 4) & (8, 8) & (4, 32) & (8, 32) \\
         DTD & (2, 1) & (4, 1) & (4, 2) & (8, 2) & (8, 4) & (8, 8) & (16, 8) & (16, 16) \\
      MIT-67 & (2, 1) & (2, 2) & (2, 4) & (8, 2) & (8, 4) & (8, 8) & (8, 16) & (16, 16) \\
 Stanf. Cars &  (2, 1) & (2, 2) & (2, 4) & (4, 4) & (4, 8) & (8, 8) & (8, 16) & (8, 32) \\
 Stanf. Dogs & (2, 1) & (4, 1) & (8, 1) & (8, 2) & (16, 2) & (16, 4) & (16, 8) & (16, 16) \\
 \bottomrule
 \end{tabular}
}
\caption{\textbf{Coarse vs. fine shard split.} We report the number of coarse and fine shards, ($n_c$, $n_f$), used for each dataset at the different sharding levels.}
\label{tab:optimum_graph_structure}
\end{table*}

\begin{table*}[t]
    \centering
 \resizebox{0.9\textwidth}{!}{
   \begin{tabular}{c|cc|cccccccc}
 \toprule
    Dataset & No sharding & Prototypes & 2 & 4 & 8 &   16 &   32 &   64 &  128 &  256  \\
 \midrule
 Caltech-256 & 94.3\% & 93.2\% & 94.2\% & 94.1\% & 94.0\% & 93.7\% & 93.7\% & 93.5\% & 93.2\% & 93.3\% \\
   CIFAR-100 & 83.1\% & 71.2\% & 84.4\% & 84.6\% & 84.1\% & 83.3\% & 82.8\% & 82.2\% & 81.5\% & 80.9\%
  \\
     CUB-200 & 88.3\% & 85.8\% & 88.6\% & 87.9\% & 86.1\% & 85.8\% & 85.3\% & 83.5\% & 82.5\% & 84.5\% \\
         DTD & 77.8\% & 73.8\% & 78.3\% & 78.3\% & 77.1\% & 75.4\% & 75.5\% & 75.1\% & 73.9\% & 74.2\% \\
      MIT-67 & 87.9\% & 85.8\% & 88.1\% & 87.7\% & 86.9\% & 86.3\% & 86.5\% & 86.0\% & 86.2\% & 86.4\% \\
 Stanf. Cars &  75.7\% & 41.0\% & 72.6\% & 68.5\% & 62.1\% & 58.3\% & 53.2\% & 47.4\% & 41.9\% & 36.2\% \\
 Stanf. Dogs & 87.9\% & 88.0\% & 88.9\% & 89.2\% & 89.2\% & 89.0\% & 88.6\% & 88.3\% & 87.7\% & 87.8\% \\
 \bottomrule
 Avg. & 85.0\% & 77.0\% & 85.0\% & 84.3\% & 82.8\% & 81.7\% & 80.8\% & 79.4\% & 78.1\% & 77.6\%
 \end{tabular}
}
\caption{\textbf{\name Accuracy at different sharding scales.} We report the accuracy of \name across different sharding scales.}
\label{tab:safe_shard_nums}
\end{table*}

\begin{table*}[t]
    \centering
 \resizebox{0.9\textwidth}{!}{
   \begin{tabular}{c|cc|cccccccc}
 \toprule
    Dataset & No sharding & Prototypes & 2 & 4 & 8 &   16 &   32 &   64 &  128 &  256  \\
 \midrule
 Caltech-256 & 94.3\% & 93.2\% & 94.2\% & 94.1\% & 93.6\% & 92.8\% & 90.1\% & 86.7\% & 84.9\% & 84.6\% \\
   CIFAR-100 & 83.1\% & 71.2\% & 84.5\% & 84.6\% & 84.0\% & 83.2\% & 82.0\% & 80.7\% & 79.7\% & 77.6\%
  \\
     CUB-200 & 88.3\% & 85.8\% & 88.5\% & 87.6\% & 83.9\% & 73.1\% & 65.3\% & 65.1\% & 62.8\% & 63.7\% \\
         DTD & 77.8\% & 73.8\% & 78.3\% & 78.1\% & 76.1\% & 74.6\% & 71.8\% & 65.1\% & 58.6\% & 52.6\% \\
      MIT-67 & 87.9\% & 85.8\% & 88.1\% & 87.7\% & 86.8\% & 85.2\% & 83.6\% & 81.5\% & 77.5\% & 77.9\% \\
 Stanf. Cars &  75.7\% & 41.0\% & 72.1\% & 66.0\% & 55.8\% & 40.9\% & 23.7\% & 17.1\% & 13.2\% & 7.0\% \\
 Stanf. Dogs & 87.9\% & 88.0\% & 88.8\% & 89.2\% & 89.2\% & 88.6\% & 86.5\% & 83.1\% & 81.1\% & 79.8\%  \\
 \bottomrule
 Avg. & 85.0\% & 77.0\% & 84.9\% & 83.9\% & 81.4\% & 76.9\% & 71.9\% & 68.5\% & 65.4\% & 63.3\%
 \end{tabular}
}
\caption{\textbf{SISA Accuracy at different sharding scales.} We report the accuracy of SISA across different sharding scales.}
\label{tab:sisa_shard_nums}
\end{table*}

\begin{table*}[t]
    \centering
 \resizebox{0.9\textwidth}{!}{
   \begin{tabular}{c|cc|cccccccc}
 \toprule
    Dataset & No sharding & Prototypes & 2 & 4 & 8 &   16 &   32 &   64 &  128 &  256  \\
 \midrule
 Caltech-256 & 94.3\% & 93.2\% & 94.2\% & 94.1\% & 93.6\% & 92.8\% & 90.1\% & 86.8\% & 86.4\% & 90.5\% \\
   CIFAR-100 & 83.1\% & 71.2\% & 84.5\% & 84.6\% & 84.0\% & 83.2\% & 82.0\% & 80.7\% & 79.7\% & 77.7\% \\
     CUB-200 & 88.3\% & 85.8\% & 88.5\% & 87.6\% & 83.9\% & 73.1\% & 65.7\% & 67.8\% & 73.5\% & 83.5\% \\
         DTD & 77.8\% & 73.8\% & 78.3\% & 78.1\% & 76.1\% & 74.6\% & 71.8\% & 66.7\% & 67.5\% & 71.2\% \\
      MIT-67 & 87.9\% & 85.8\% & 88.1\% & 87.7\% & 86.9\% & 85.2\% & 83.9\% & 81.7\% & 81.2\% & 84.4\% \\
 Stanf. Cars &  75.7\% & 41.0\% & 72.1\% & 66.0\% & 55.8\% & 40.9\% & 23.9\% & 17.8\% & 16.9\% & 25.5\% \\
 Stanf. Dogs & 87.9\% & 88.0\% & 88.8\% & 89.2\% & 89.2\% & 88.6\% & 86.6\% & 83.2\% & 81.9\% & 83.5\% \\
 \bottomrule
 Avg. & 85.0\% & 77.0\% & 84.9\% & 83.9\% & 81.4\% & 76.9\% & 72.0\% & 69.3\% & 69.6\% & 73.8\%
 \end{tabular}
}
\caption{\textbf{ProtoSISA Accuracy at different sharding scales.} We report the accuracy of ProtoSISA across different sharding scales.}
\label{tab:protosisa_shard_nums}
\end{table*}

\clearpage

\end{document}